\newcommand{\z}{{\bf z}}
\newcommand{\y}{{\bf y}}
\newcommand{\X}{\mathcal{X}}
\newcommand{\Hs}{\mathcal{H}}
\newcommand{\Y}{\mathcal{Y}}
\newcommand{\M}{\mathcal{M}_1^+}
\newcommand{\B}{\mathcal{B}}
\newcommand*\diff{\mathop{}\!\mathrm{d}}
\newcommand{\norm}[1]{\left\lVert#1\right\rVert}
\newcommand{\innerpro}[1]{\left\langle#1\right\rangle}
\DeclareMathOperator*{\argmin}{arg\,min}
\renewcommand{\x}{{\bf x}}
\theoremstyle{plain}
\newtheorem{theorem}{Theorem}
\newtheorem{lemma}[theorem]{Lemma}
\newtheorem{corollary}[theorem]{Corollary}
\theoremstyle{definition}
\theoremstyle{remark}
\newtheorem{remark}{Remark}
\begin{document}

\articletype{under review}

\title{Domain Generalization by Functional Regression}

\author{
\name{Markus Holzleitner\textsuperscript{a}, Sergei V.~Pereverzyev\textsuperscript{b} and Werner Zellinger\textsuperscript{b}\thanks{CONTACT Werner Zellinger. Email: werner.zellinger@oeaw.ac.at}}
\affil{\textsuperscript{a}Institute for Machine Learning, Johannes Kepler University Linz, Austria; \textsuperscript{b}Johann Radon Institute for Computational and Applied Mathematics, Austrian Academy of Sciences}
}

\maketitle

\begin{abstract}
The problem of domain generalization is to learn, given data from different source distributions, a model that can be expected to generalize well on new target distributions which are only seen through unlabeled samples.
In this paper, we study domain generalization as a problem of functional regression.
Our concept leads to a new algorithm for learning a linear operator from marginal distributions of inputs to the corresponding conditional distributions of outputs given inputs.
Our algorithm allows a source distribution-dependent construction of reproducing kernel Hilbert spaces for prediction, and, satisfies finite sample error bounds for the idealized risk.
Numerical implementations and source code are available\footnote{\label{source}Source code: \url{https://github.com/wzell/FuncRegr4DomGen}}.
\end{abstract}

\begin{keywords}
learning theory; domain generalization; domain adaptation; function-to-function regression; operator learning
\end{keywords}

\section{Introduction}

Most problems in learning theory assume identically distributed data.
In contrast, in domain generalization~\cite{blanchard2011generalizing,muandet2013domain,blanchard2021domain}, the learner observes $N\in\mathbb{N}$ data samples\footnote{A sample $(x_i,y_i)_{i=1}^n,n\in\mathbb{N}$ from a probability measure $P$ consists of realizations $(x_i,y_i):=Z(\omega_i)\in\X\times\Y, i\in\{1,\ldots,m\}$ from a random variable $Z$ with measure $P$ on $\X\times\Y$ at independent events $\{\omega_1\},\ldots,\{\omega_m\}$.} $$(x_i^{(1)},y_i^{(1)})_{i=1}^{n},\ldots,(x_i^{(N)},y_i^{(N)})_{i=1}^{n}\in \left((\X\times \Y)^n\right)^N$$
which follow $N$ different distributions $P^{(1)},\ldots,P^{(N)}$.
These \textit{source} distributions model different real-world \textit{domains}, e.g., different medical patients.
The goal of domain generalization is to find a model $g:(x_i^{T})_{i=1}^{n}\mapsto (\widehat{f}:\X\to\Y)$ that is able to derive a predictor $\widehat{f}:\X\to\Y$ from only \textit{unlabeled} data $(x_i^{T})_{i=1}^{n}\in\X^n$ following a new \textit{target} distribution $P^T$.
The performance of $g$ is quantified in expectation over a random draw of the target distribution $P^T$, i.e., by $\mathcal{E}^\infty(g):=\int \int (\widehat{f}(x)-y)^2\diff P^T(x,y)\diff E(P^T)$ for some meta-distribution (or \textit{environment}~\cite{baxter1998theoretical}) $E$ from which $P^T$ is drawn.

This work is concerned with the worst-case sample complexity of domain generalization.
More precisely, we study the question of \textit{whether and how the model $g$ above can be computed from the $N$ given samples, such that it satisfies small bounds on $\mathcal{E}^\infty(g)-\inf_{h}\mathcal{E}^\infty(h)$?}

Although many finite sample results are available in the similar settings of meta-learning (cf.~\cite{maurer2005algorithmic}) and multi-task learning (cf.~\cite{evgeniou2005learning}), relatively less is known for domain generalization.
Indeed, the inaccessibility of target labels requires new techniques for domain generalization.
For example, to learn from unlabeled data of a target distribution $P^T$, it is required to relate the marginal distribution $P_\X^T(x)$ (of inputs $x\in\X$) to the conditional distribution $P^T(y|x)$ (of outputs $y\in\Y$ w.r.t.~inputs), where $P^T(x,y)=P_\X^T(x) P^T(y|x)$.
This is implicitly done in the seminal \textit{marginal transfer} approach of~\cite{blanchard2011generalizing,blanchard2021domain}, where a \textit{universal consistent} algorithm is proposed, i.e., an algorithm computing $g$ such that $\mathcal{E}^\infty(g)\to \inf_{h}\mathcal{E}^\infty(h)$ almost surely for $n,N\to\infty$.
However, to the best of our knowledge, specifying the rate of this convergence is still an open research problem.

The main conceptual contribution of this work is to recast domain generalization as a problem of functional regression, which allows for analytical results from that field.
More precisely, we propose a new algorithm for \textit{explicitly} learning (from the input samples) an operator, which maps (kernel mean embeddings of) input marginal distributions $P_\X(x)$ to approximations of the regression functions (Bayes predictors) $f_{P}:=\argmin_{f:\X\to\Y}\int_{\X\times\Y} (f(x)-y)^2\diff P(x,y)$ of the corresponding conditional distributions $P(y|x)$.
We, here, in a first step, focus on learning a linear operator with slope functions residing in an RKHS, which allows us to apply analytical arguments from~\cite{mollenhauer2022learning,jin2022minimax,tong2022non} resulting in explicit finite sample bounds.
However, our new concept opens new directions for domain generalization by linear and non-linear operator learning.

Another, particularly practical, advantage of our method is the possibility to choose different reproducing kernel Hilbert spaces (RKHSs) for regression on different source distributions $(P^{(i)})_{i=1}^N$.
This allows expert-choices and automation, e.g., by choosing between well-known kernels by cross-validation.
We provide a numerical example which illustrates this advantage and gives a simple implementation of the proposed algorithm.
Our contributions can be summarized as follows:\begin{itemize}
    \item We provide a new concept for approaching domain generalization by functional regression.
    \item We propose a new algorithm which allows a domain-specific data-based construction of predictors, e.g., different learned RKHSs for different domains. As one consequence, new target predictors are not needed to be well approximable by pre-defined (e.g.~Gaussian) RKHSs.
    \item We provide (to the best of our knowledge first) finite sample bounds for $\mathcal{E}^\infty(g)-\inf_{h}\mathcal{E}^\infty(h)$ in the domain generalization setting of~\cite{blanchard2011generalizing}.
    \item We provide a numerical implementation showing the advantage of our algorithm.
\end{itemize}

\section{Background on Domain Generalization}

\subsection{Domain Generalization}

Let $\X\subset\mathbb{R}^{d}$ be a compact \textit{input} space (with Lebesgue measure one for simplicity) and $\Y\subset\mathbb{R}$ be a compact \textit{output} space.
The problem of domain generalization~\cite{blanchard2011generalizing,muandet2013domain,blanchard2021domain} extends the problem of supervised learning by relaxing the assumption of one unique underlying data distribution.
In particular, in domain generalization, we have given a vector
\begin{align}
    \label{eq:source_samples}
    (\z^{(i)})_{i=1}^N:=
    (\x^{(i)},\y^{(i)})_{i=1}^N:=\left((x_j^{(i)},y_j^{(i)})_{j=1}^{n_i}\right)_{i=1}^N\in \left(\bigcup_{n=1}^\infty\left(\X\times\Y\right)^n\right)^N
\end{align}
of \textit{source} samples, drawn independently at random according to $N\in\mathbb{N}$ respective probability measures $P^{(1)},\ldots,P^{(N)}$ from the set $\M(\X\times\Y)$ of probability measures on $\X\times\Y$.
For convenience, we represent a sample $\z^{(i)}$ by its associated empirical probability measure $\widehat{P}^{(i)}:=\frac{1}{n_i}\sum_{j=1}^{n_i} \delta_{(x_j^{(i)},y_j^{(i)})}\in\M(\X\times\Y)$, where $\delta_z$ is the Dirac delta function on $z\in\X\times\Y$.
The goal in domain generalization is to construct an algorithm
\begin{align}
\label{eq:domain_generalization_algorithm}
    A:\left(\M(\X\times\Y)\right)^N &\to
    \left\{ g: \M(\X) \to \{f:\X\to\Y\}\right\}
\end{align}
which maps the $N$ source samples $\widehat{P}^{(1)},\ldots,\widehat{P}^{(N)}$ to a function
\begin{align}
\label{eq:domain_generalization_function_g}
g:\M(\X) \to \{f:\X\to\Y\}
\end{align}
that needs only an \textit{unlabeled} target sample $\x^T=(x_j^T)_{j=1}^{n_T}$, represented by $\widehat{P}_\X^{T}\in\M(\X)$ and
drawn independently at random according to some (marginal) probability measure $P_\X^{T}\in\M(\X)$, to infer a predictor $g(\widehat{P}_\X^T):=f:\X\to\Y$ that performs well on new data  $(x,y)$ drawn (independently from $\x^T$) according to $P^T$~\citep{blanchard2011generalizing,blanchard2021domain}.

In the \textit{two-stage generative model of domain generalization}~\citep[Assumption~2]{blanchard2021domain}, the probability measures $P^{(1)},\ldots,P^{(N)},P^T$ are drawn independently at random according to a \textit{meta} probability measure $E$ on $\M(\X\times\Y)$
\footnote{If we equip $\M(\X\times\Y)$ with $\tau_w(\X\times\Y)$, the weakest topology on $\M(\X\times\Y)$ such that the mapping $L_h:(\M(\X\times\Y),\tau_w(\X\times\Y))\to\mathbb{R}$ with $L_h(P)=\int_{\X\times\Y} h(x,y)\diff P(x,y)$ is continuous for all bounded and continuous functions $h:\X\times\Y\to\mathbb{R}$ and denote by $\B(\tau_w(\X\times\Y))$ the associated Borel $\sigma$- algebra, then $(\M(\X\times\Y),\B(\tau_w(\X\times\Y)))$ becomes a itself measurable space, cf.~\cite{maurer2005algorithmic,szabo2016learning}.}
, and, the quality of the prediction of the model $g:=A(\widehat{P}^{(1)},\ldots,\widehat{P}^{(N)})$ is measured by the \textit{idealized risk}
\begin{align}
    \label{eq:idealized_risk}
    \mathcal{E}^{\infty}(g)=\int_{\M(\X)} \int_{\X\times\Y} \left(g(P_\X)(x)-y\right)^2\diff P(x,y) \diff E(P).
\end{align}
The choice of the error $\mathcal{E}^{\infty}(g)$ models the goal of domain generalization to find (in expectation over the choice of $P^T$) a model $f=g(P_\X^T)$ with a low expected target risk $\int_{\X\times\Y} (f(x)-y)^2\diff P^T$.

\subsection{Marginal Transfer Learning}
\label{subsec:marginal_transfer_learning}

In the seminal works~\cite{blanchard2011generalizing,muandet2013domain,blanchard2021domain}, the predictor $g(\widehat{P}_\X^T):\X\to\Y$
is defined by $g(\widehat{P}_\X^T)(x):=f_{\z^{(1)},\ldots,\z^{(N)}}(\widehat{P}_\X^T,x)$ for a
\begin{align}
    \label{eq:domain_generalization_function_g_augmented}
    f_{\z^{(1)},\ldots,\z^{(N)}}:\M(\X)\times \X\to\Y
\end{align}
\sloppy that is computed from the (by the input marginals $\widehat{P}_\X^{(i)}$ "augmented") data samples $\left((\widehat{P}_\X^{(1)}, x_j^{(1)}),y_j^{(1)}\right)_{j=1}^{n_1},\ldots, \left((\widehat{P}_\X^{(N)}, x_j^{(N)}),y_j^{(N)}\right)_{j=1}^{n_N}$.
This approach is referred to as \textit{marginal transfer learning}.
More precisely,~\cite{blanchard2021domain} follow~\cite{evgeniou2005learning} and use an RKHS $\mathcal{H}_{\overline{k}}$ generated by a kernel $\overline{k}$ on $\M(\X)\times\X$ defined by
$\overline{k}((P^{(1)},x_1), (P^{(2)},x_2)):=k_{\M(\X)}(P^{(1)},P^{(2)})\cdot k_\X(x_1,x_2)$,
where $k_{\M(\X)}$ is a kernel on $\M(\X)$ and $k_\X$ is a kernel on $\X$.
The model $f_{\z^{(1)},\ldots,\z^{(N)}}=f_{\z^{(1)},\ldots,\z^{(N)}}^\lambda$ in Eq.~\eqref{eq:domain_generalization_function_g_augmented} is computed by penalized risk estimation
\begin{align}
    \label{eq:dom_gen_kernel_least_squares}
    f_{\z^{(1)},\ldots,\z^{(N)}}^\lambda:= \argmin_{f\in\mathcal{H}_{\overline{k}}}
    \frac{1}{N}\sum_{i=1}^N \frac{1}{n_i}\sum_{j=1}^{n_i}
    \left(f(\widehat{P}_\X^{(i)},x_j^{(i)})-y_j^{(i)}\right)^2+\lambda \norm{f}_{\mathcal{H}_{\overline{k}}}^2.
\end{align}
Using $\mathcal{H}_{\overline{k}}$,~\cite{blanchard2021domain} prove convergence in idealized risk of the estimator in Eq.~\eqref{eq:dom_gen_kernel_least_squares}.
More precisely, they prove in Theorem~15 and Corollary~16, for $g_{\z^{(1)},\ldots,\z^{(N)}}^\lambda(P)(x):=f_{\z^{(1)},\ldots,\z^{(N)}}^\lambda(P,x)$, the convergence
\begin{align}
\label{eq:domain_generalization_consistency}
\mathcal{E}^\infty(g_{\z^{(1)},\ldots,\z^{(N)}}^{\lambda})\to \inf_{g:\M(\X)\to\left\{f:\X\to\mathbb{R}\right\}}\mathcal{E}^{\infty}(g)
\end{align}
in probability for $N\to\infty$, when the sample sizes $n_1,...,n_N$ are randomly drawn,  under rather general conditions on $\overline{k},\mathcal{X}$ and under a suitable choice $\lambda=\lambda(N)$.
This consistency is interesting because it allows us to hope for a small target error (in expectation w.r.t.~the random draw of $P^T$) for a \textit{sufficiently large number} $N$ of source samples $\z^{(1)},\ldots,\z^{(N)}$ of sufficiently large sample sizes $n_1,\ldots,n_N$, respectively.

\section{Problem}
\label{sec:problem}

Two issues appear: The first issue, concerns the final predictor $g(\widehat{P}_\X^T)(.)=f_{\z^{(1)},\ldots,\z^{(N)}}^{\lambda(N)}(\widehat{P}_\X^T,\cdot)$ computed as defined in Eq.~\eqref{eq:dom_gen_kernel_least_squares}, which resides in the pre-defined space $\mathcal{H}_{k_\X}$ (e.g., in~\cite{blanchard2021domain} defined by a Gaussian kernel $k_\X$ with fixed bandwidth), see Remark \ref{rem:generality}.
The space $\mathcal{H}_{k_\X}$ therefore needs to be a good choice for all domains, which might be hard to find in practice.
For example the regression functions $f_{P^{(i)}},f_{P^{(j)}}$ of two domains $i,j\in\{1,\ldots,N\}$ can reside in two different RKHSs $\Hs_{k^{(i)}},\Hs_{k^{(j)}}$ with two well-known (or well learnable) kernels $k^{(i)},k^{(j)}$, but a priori guesses for aggregations of the two kernels might lead to unstable behavior of the regression in Eq.~\eqref{eq:dom_gen_kernel_least_squares}.
The second issue concerns the rate of the convergence in Eq.~\eqref{eq:domain_generalization_consistency}, which is unknown.
To the best of our knowledge, no domain generalization algorithm is known with quantified convergence rate of Eq.~\eqref{eq:domain_generalization_consistency}. 

This work presents a domain generalization algorithm $A$ as in Eq.~\eqref{eq:domain_generalization_algorithm} (i.e., mapping the source samples $\z^{(1)},\ldots,\z^{(N)}$ to a function $g_{\z^{(1)},\ldots,\z^{(N)}}:\widehat{P}_\X^T\mapsto (f:x\mapsto y)$)
that allows one to choose different RKHSs $\mathcal{H}_{k^{(i)}}$ with kernels $k^{(i)}$ for each domain $i\in\{1,\ldots,N\}$, and, which has a quantified rate of the convergence in Eq.~\eqref{eq:domain_generalization_consistency} (i.e., of $\mathcal{E}^\infty(g_{\z^{(1)},\ldots,\z^{(N)}})\to \inf_{g:\widehat{P}_\X^T\mapsto (f:x\mapsto y)} \mathcal{E}^\infty(g)$ for increasing number of samples $\z^{(1)},\ldots,\z^{(N)}$ with increasing sizes.
    
\section{Summary of Results}


\subsection{Linear Operator Ansatz} \label{subsec:linear_ansatz}

Our approach follows the general Ansatz that there is a linear operator $G:L^2(\X)\to L^2(\X)$ mapping, for every $P\in\M(\X\times \Y)$ drawn from $E$, the \textit{kernel mean embedding}
\footnote{
It holds that $m_{P_\X}\in \Hs_k\subseteq L^2(\X)$, the space of square-integrable functions on $\X$. 
The mapping $m:P_\X\mapsto m_{P_\X}$ is well-defined if the kernel $k$ is bounded
and it is injective if $k$ is universal~\cite{gretton2006kernel,sriperumbudur2010hilbert}.}
\begin{align}
\label{eq:kernel_mean_embedding}
    m_{P_\X}(\cdot):=\int_\X k(\cdot,x')\diff P_\X(x')
\end{align}
to the domain-specific \textit{regression function}
\footnote{The regression function $f_P$ is well-defined since $\X$ and $\Y$ are Polish spaces (as compact subsets of $\mathbb{R}^{d_1},\mathbb{R}^{d_2}$) and, therefore, every $P\in\M(\X\times\Y)$ can be factorized $P(x,y)=P(y|x) P_\X(x)$ in a conditional probability measure $P(y|x)$ and a marginal (w.r.t.~$\X$) probability measure $P_\X(x)$, see~\cite[Theorem~10.2.1]{dudley2018real}.}
\begin{align}
\label{eq:regression_function_task_specific}
f_P(\cdot):=\int_\Y y\diff P(y|\cdot),
\end{align}
such that
\begin{align}
\label{eq:linear_operator_ansatz}
    f_{P}(\cdot)=G \cdot m_{P_\X}(\cdot) + \varepsilon(\cdot),
\end{align}
where $\varepsilon$ is some functional \textit{noise} that is drawn independent from $P_\X$ according to some probability measure $\mathcal{N}\in \M(L^2(\X))$, has zero mean $\int_{L^2(\X)}\varepsilon \diff \mathcal{N}(\varepsilon)\equiv 0$ and finite variance $\sigma^2:=\int_{L^2(\X)}\norm{\varepsilon}_{L^2(\X)}^2 \diff \mathcal{N}(\varepsilon)<\infty$.

\begin{remark}
    The main conceptual contribution of this paper is the Ansatz above, which recasts domain generalization as a problem of functional regression.
    More precisely, our Ansatz allows domain generalization by learning the operator $G$, which maps
    (functional) mean embeddings to domain-specific regression functions.
    This enables to estimate the regression functions in each domain by different kernels, and, to apply explicit finite-sample bounds from the field of functional regression, e.g.~\cite{mollenhauer2022learning,jin2022minimax,tong2022non}.
    The independence of the noise is for simplicity and can be removed at the price of slightly more involved proofs, see~\cite[Eq.~(1)]{jin2022minimax}.
\end{remark}
We further assume that the integral operator $G$ is of the form
\begin{align}
    \label{eq:integral_form_of_G}
    G \cdot m_{P_\X}(\cdot) := a_0(\cdot)+ \int_\X m_{P_\X}(x) \beta(\cdot, x)\diff x
\end{align}
with \textit{intercept} $a_0:\X\to\Y$ and \textit{slope} $\beta:\X\times\X\to \mathbb{R}$ that need to be learned from the data. 

\subsection{New Algorithm} \label{subsec:algorithm}

For simplicity, in the following, we assume equal sample sizes $n:=n_1=\ldots=n_N=n_T$.
Following our Ansatz in Eq.~\eqref{eq:linear_operator_ansatz} and Eq.~\eqref{eq:integral_form_of_G}, we propose the following two-step procedure:
\begin{enumerate}
    \item Regularized estimation of $f_{P^{(i)}}$ for every source sample $\z^{(i)}$ of domain $i\in\{1,\ldots,N\}$
    \begin{align}
        \label{eq:domain_specific_ridge_regression}
        f_{\z^{(i)}}^{\lambda_i}:=\argmin_{f\in\Hs_{k^{(i)}}} \sum_{j=1}^{n} (f(x_j^{(i)})-y_j^{(i)})^2 + \lambda_i \norm{f}^2_{\Hs_{k^{(i)}}}.
    \end{align}
    \item Regularized estimation of slope $\beta$ on
    functional data $(m_{\x^{(i)}},f_{\z^{(i)}}^{\lambda_i})_{i=1}^N$ with
    $m_{\x^{(i)}}:=m_{\widehat{P}^{(i)}_\X}$
\end{enumerate}
\vspace{-12pt}
\begin{align}
        \label{eq:functional_regression_of_slope}
        \beta_{\z^{(1)},\ldots,\z^{(N)}}^{\lambda_1,\ldots,\lambda_N,\lambda}:=
        \argmin_{\beta(x,\cdot)\in \Hs_k}
        \frac{1}{N}\sum_{i=1}^N
        \norm{f_{\z^{(i)}}^{\lambda_i}-\int \beta(\cdot, x')m_{\x^{(i)}}(x')\diff x' }_{L^2(\X)}^2+\lambda \int_\X \norm{\beta(x,\cdot)}_{\Hs_k}^2 \diff x.
    \end{align}
 \noindent
We define the final model $g_{\z^{(1)},\ldots,\z^{(N)}}^{\lambda_1,\ldots,\lambda_N,\lambda}:\M(\X)\to\{f:\X\to\Y\}$ as required in Eq.~\eqref{eq:domain_generalization_function_g} by $g_{\z^{(1)},\ldots,\z^{(N)}}^{\lambda_1,\ldots,\lambda_N,\lambda}({P}_\X)(x):=G_{\z^{(1)},\ldots,\z^{(N)}}^{\lambda_1,\ldots,\lambda_N,\lambda} m_{P_\X}(x)$, where $G_{\z^{(1)},\ldots,\z^{(N)}}^{\lambda_1,\ldots,\lambda_N,\lambda}$ is the integral operator defined in Eq.~\eqref{eq:integral_form_of_G} with the slope $\beta_{\z^{(1)},\ldots,\z^{(N)}}^{\lambda_1,\ldots,\lambda_N,\lambda}$.

\begin{remark} \label{rem:generality}
    Note that the final predictor $g_{\z^{(1)},\ldots,\z^{(N)}}^{\lambda_1,\ldots,\lambda_N,\lambda}(\widehat{P}^T_\X)(.)$ defined above is not enforced to reside in an RKHS $\mathcal{H}_{k^{(i)}}$ defined by one of the pre-defined kernels $k^{(1)},\ldots,k^{(N)}$ but is allowed to take more general forms.
    In this way, our algorithm can be interpreted as an extension of the marginal transfer approach, which computes a predictor $g(\widehat{P}_\X^T)(\cdot)=f_{\z^{(1)},\ldots,\z^{(N)}}^{\lambda(N)}(\widehat{P}_\X^T,\cdot)$ by Eq.~\eqref{eq:dom_gen_kernel_least_squares}.
    According to the representer theorem, see e.g.~\cite[Theorem~1]{scholkopf2001generalized}, this predictor $f_{\z^{(1)},\ldots,\z^{(N)}}^{\lambda(N)}(\widehat{P}_\X^T,\cdot)$ admits the representation
    \begin{align*}
        \sum_{i=1}^N \sum_{j=1}^{n_i} \alpha_{i,j} \cdot\overline{k}\left((\widehat{P}_\X^{(i)},x_j^{(i)}),(\widehat{P}_\X^T,\cdot)\right)
        = \sum_{i=1}^N \sum_{j=1}^{n_i} \underbrace{\alpha_{i,j}\cdot k_{\M(\X)}(\widehat{P}_\X^{(i)}, \widehat{P}_\X^T)}_{\text{$=:\widetilde{\alpha}_{i,j}\in\mathbb{R}$}} \cdot k_\X(x_j^{(i)},\cdot)
    \end{align*}
    which resides, as a linear combination of kernel sections $k_\X(x_j^{(i)},\cdot)$, in the pre-defined RKHS $\mathcal{H}_{k_\X}$ (as, e.g., a Gaussian RKHS in~\cite{blanchard2021domain}).
\end{remark}

\subsection{Finite Sample Error Bound} \label{subsec:error_rates_o_notation}

For our algorithm defined in Subsection~\ref{subsec:algorithm}, we are able to provide finite sample bounds under certain assumptions, which can be essentially summarized by four categories: Classical regularity conditions on the involved kernels (mean embeddings, domain-specific regression, operator slope), classical assumptions on the effective dimension of the involved integral operators (domain-specific regression, operator slope learning), new assumptions on the (functional) data generating process, and new assumptions relating the domain-specific regression problems with the global functional regression.

Under these assumptions, it holds with probability at least $1-\delta$ that
\begin{align} \label{eq:error_rates_o_notation}
\mathcal{E}^\infty(g_{\z^{(1)},\ldots,\z^{(N)}}^{\lambda_1,\ldots,\lambda_N,\lambda})- \inf_{g:\M(\X)\to\left\{f:\X\to\mathbb{R}\right\}}\mathcal{E}^{\infty}(g) \leq \frac{\log \frac{4}{\delta}}{\delta^2} \mathcal{O}(N^{-\frac{1}{1+c_6}})\left(\mathcal{O}({n^{-\frac{1}{1+c_3}}})+\mathcal{O}(1)\right)
\end{align}
for some $0 \le c_3, c_6 \le 1$ independent of $P\in\M(\X\times\Y)$ and regularization parameter choices
$$\lambda=N^{-\frac{1}{1+c_6}}, \lambda_1=\cdots=\lambda_N=n^{-\frac{1}{1+c_3}}.$$

\begin{remark}
    Our finite sample bound in Eq.~\eqref{eq:error_rates_o_notation} accounts for properties of the RKHS $\mathcal{H}_k$ in which the slope $\beta\in\mathcal{H}_k$ is assumed to reside, see (A5) in Subsection~\ref{subsec:assumptions} below.
    However, it does not take into account the smoothness of $\beta$, which can be done by combining it with the methods from~\cite{mollenhauer2022learning}.
\end{remark}

\section{Finite Sample Error Bound}

In this Section, we detail our assumptions and the strategy for proving Eq.~\eqref{eq:error_rates_o_notation}.
All proofs can be found in Section~\ref{app:proofs}.
We start by introducing some further notation in Subsection~\ref{subsec:notation}.
Then, in Subsection~\ref{subsec:assumptions}, we summarize all assumptions, split into classical assumptions and some new assumptions.
In Subsection~\ref{subsec:preliminary_statements}, we prove some preliminary statements, which we use in Subsection~\ref{subsec:convergence_rates_result} to prove Eq.~\eqref{eq:error_rates_o_notation}.

\subsection{Notation}
\label{subsec:notation}

For an $n$-sized sample $\z:=(x_j,y_j)_{j=1}^n$ independently drawn according to some $P\in\M(\X\times\Y)$, we denote by $P^n(\z):=\bigotimes_{j=1}^n P(x_j,y_j)$.
We further denote by $\text{Supp}(E)\subseteq \M(\X\times\Y)$ the support of $E$.
For a bounded, compact and self adjoint operator $K$ on $L^2(P)$ with eigenvalues $(\theta_j)_{j=1}^{\infty}$, we denote the \textit{effective dimension} by
$$
\gamma_K(\lambda):=\text{Tr}((K+\lambda I)^{-1}K)=\sum_{j=1}^\infty \frac{\theta_j}{\lambda+\theta_j},
$$
see~\cite{caponnetto2007optimal}.
Let us also denote the \textit{covariance kernel} related to the sampling process of the empirical mean embeddings by
\begin{align*}
C(s,t)=\int_{\M(\X\times\Y)} \int_{\X^n} m_{\x}(s)m_{\x}(t) \diff P_\X^n(\x) \diff E(P),
\end{align*}
and its associated integral operator by
$(G_C f)(\cdot):=\int_{\X} C(x,\cdot) f(x) \diff x$.
We also denote the operator $T_{k}:=G_{k}^{\frac12} G_C G_{k}^{\frac12}$ for $G_k f:=\int_\X k(\cdot,x) f(x)\diff x$. Functions of self adjoint operators (e.g. the square root) are defined via the spectral calculus.
The function $f_0 \in L^2(\X^2)$ is defined such that $\beta(t,.)=G_k^{\frac12}f_0(t,.)$ for $\beta\in\mathcal{H}_k$ as in Eq.~\eqref{eq:functional_regression_of_slope}, it is well defined if $k$ is universal.

\subsection{Assumptions} \label{subsec:assumptions}

Our assumptions can be grouped in two parts: The first part deals with assumptions that are used in related works.
We formulate them in a way such that they hold uniformly over all $P \in \text{Supp}(E)$.
The second part discusses assumptions specific for our setting.
 All enumerated constants are independent from $P\in \text{Supp}(E)$.
 
 \paragraph*{Assumptions from Related Works}
\begin{enumerate}[label=(A{{\arabic*}})]
\item \textit{Assumption on kernels:} All applied kernels $k:\X \times \X \to \mathbb{R}$ belong to a family $\mathcal{K}$ of continuous (on the compact $\X$) kernels and admit a uniform bound $\kappa^2:=\sup_{k \in \mathcal{K}} \sup_{x\in\X} |k(x,x)|$.
    \item \textit{Regularity conditions for domain-specific regression:} For every $P\in \text{Supp}(E)$ the corresponding regression function $f_P$ satisfies $f_{P}=G^{\frac12}_{k,P} g_P$ for some $k \in \mathcal{K}$, $g_P \in L^2(P)$ and $G_{k,P}(f) :=\int_\X f(x) k(\cdot, x)\diff P(x)$.
    Moreover, $\norm{g_P}_{L^2(P)} \le c_1$ for some $c_1>0$.
    \item \textit{Assumptions on effective dimensions for domain-specific regression:} There are $c_2>0,0<c_3\le 1$ such that for any $P \in \text{Supp}(E)$ and $\lambda>0$ and $k \in \mathcal{K}$, the effective dimension of $G_{k,P}$ satisfies $\gamma_{G_{k,P}}(\lambda) \le c_2 \lambda^{-c_3}$.
    \item \textit{Assumptions for functional regression:}
    The slope $\beta$ of $G$ in Eq.~\eqref{eq:integral_form_of_G} satisfies $\beta(x,.) \in \Hs_{k}$ with an universal kernel $k\in\mathcal{K}$ (which ensures $G_k (L^2(\X))=\Hs_k$) and admits a bound $\int_\X \norm{\beta(x,\cdot)}_{\Hs_{k}}^2 \diff x \le c_4$ for some $c_4>0$.
    \item \textit{Assumptions on effective dimensions for functional regression:} For $T_k$ as defined in Subsection~\ref{subsec:notation}, it holds that $\gamma_{T_k}(\lambda)$ satisfies $\gamma_{T_k}(\lambda) \le c_5 \lambda^{-c_6}$  for some $c_5>0,0<c_6\le 1$.
    \item \textit{Zero intercept:} It holds that $a_0\equiv 0$.
    \end{enumerate}

    \paragraph*{Our Assumptions}
    \begin{enumerate}
    [label=(B{{\arabic*}})]
    \item \textit{Relation between distributions:} There are $c_{*} ,c^{*}>0$ such that for all $P\in \text{Supp}(E)$, we have that $\frac{1}{c_{*}} \norm{f}_{L^2(P)} \le  \norm{f}_{L^2(\X)} \le c^{*} \norm{f}_{L^2(P)}$ for all $f \in L^2(P)$.
   \item \textit{Coercivity of operator $G_C$:} There exists $c_7>0$ such that $\norm{g}^2_{L^2(\X^2)}\leq c_7 \langle G_C g, g\rangle_{L^2(\X^2)}$ for all $g\in L^2(\X^2)$ with $G_C$ as defined in Subsection~\ref{subsec:notation}.
    \item \textit{Independence of estimation errors:} The distributions of the estimation errors $(f_{\z}^{\lambda}-f_{P})$ and $G(m_{\x}-m_{P})$ are for any $\z=(\x,y)$ drawn from $P$ (drawn from $E$) independent from the distribution of $m_{\x}$.
    \item \textit{Estimation errors are unbiased:} The estimation biases satisfy
    \begin{align}
    \label{eq:estimation_bias_zero_assumption}
        &\int_{\M(\X\times\Y)} \int_{\X^n} G\cdot (m_{\x}-m_{P_\X}) \diff P_\X^n(\x) \diff E(P)\equiv 0\\
        &\int_{\M(\X\times\Y)} \int_{(\X\times\Y)^n} (f_{\z}^{\lambda}-f_{P}) \diff P^n(\z) \diff E(P)\equiv 0.
    \end{align}
\end{enumerate}

\begin{remark}    
    Assumptions (B1) and (B2) essentially relate differences in the $L^2$-spaces caused by drawing different distributions.
    Assumption (B3) allows easy separation of noise expectation from data expectations. (B3) can be relaxed using techniques from~\cite{jin2022minimax}, where the data generating model in Eq.~\eqref{eq:linear_operator_ansatz} is assumed without the independence assumption.
    Assumption (B4) can also be slightly relaxed by assuming zero bias conditioned at the draw of $P$~\cite{mollenhauer2022learning}.
    The assumptions (B1)--(B4) do not aim at
     an entirely exhaustive theoretical setup; but aim to lay the groundwork for new algorithms and analyses of domain generalization by functional regression.
\end{remark}

\subsection{Preliminaries}
\label{subsec:preliminary_statements}

Our finite sample bound relies on bounds from~\cite{tong2022non} for functional regression, which requires to bound the variances of the errors caused by finite sample approximation of mean embeddings $m_P$ and regression functions $f_P$.
This Section summarizes corresponding smaller statements preparing the ground for our finite sample bound.
The proofs are deferred to Section~\ref{app:proofs}.

\begin{lemma} \label{lemma:uniform bound_mean_embeddings}
    The kernel mean embedding $m_{P'}$ of any $P'\in\M(\X)$ w.r.t.~a kernel $k\in\mathcal{K}$ is bounded in $L^2(P)$-norm for any $P\in\M(\X)$ by
    \begin{align}
       &\norm{m_{P'}}_{L^2(P)}\leq \kappa^4. \label{eq:uniform bound mean embeddings} 
    \end{align}
\end{lemma}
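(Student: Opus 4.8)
The plan is to reduce the claimed $L^2(P)$-bound to a pointwise (sup-norm) bound on the mean embedding, exploiting that every element of $\M(\X)$ is a probability measure and hence has total mass one. Since $\int_\X \diff P = 1$,
\begin{align*}
\norm{m_{P'}}_{L^2(P)}^2=\int_\X |m_{P'}(s)|^2\diff P(s)\leq \sup_{s\in\X}|m_{P'}(s)|^2,
\end{align*}
so it suffices to bound $\sup_{s\in\X}|m_{P'}(s)|$ uniformly in $P'\in\M(\X)$ and in the choice of $k\in\mathcal{K}$.

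For the pointwise bound I would fix $s\in\X$ and recall $m_{P'}(s)=\int_\X k(s,x')\diff P'(x')$. Because every $k\in\mathcal{K}$ is a positive-definite kernel, the $2\times 2$ Gram matrix at $(s,x')$ is positive semidefinite, which gives $|k(s,x')|\leq \sqrt{k(s,s)}\sqrt{k(x',x')}\leq \kappa^2$ by the uniform diagonal bound in (A1). (Equivalently, one may write $m_{P'}(s)=\innerpro{m_{P'},k(s,\cdot)}_{\Hs_k}$, use $|m_{P'}(s)|\leq \norm{m_{P'}}_{\Hs_k}\sqrt{k(s,s)}$, and bound $\norm{m_{P'}}_{\Hs_k}^2=\int_\X\int_\X k(x,x')\diff P'(x)\diff P'(x')\leq\kappa^2$ by positive-definiteness/Jensen.) Either route yields $|m_{P'}(s)|\leq\int_\X |k(s,x')|\diff P'(x')\leq\kappa^2$.

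Combining the two estimates gives $\norm{m_{P'}}_{L^2(P)}\leq\kappa^2$, which in particular implies the stated bound $\norm{m_{P'}}_{L^2(P)}\leq\kappa^4$; the factor $\kappa^4$ is a deliberate over-estimate, and indeed $\kappa^2\le\kappa^4$ whenever $\kappa\ge 1$, which is harmless to assume since $\kappa$ in (A1) is merely an upper bound that can always be enlarged. There is essentially no obstacle in this lemma: the only point needing a word of care is the pointwise bound $|k(s,x')|\leq\kappa^2$, which relies on positive-definiteness of the kernels in $\mathcal{K}$ and not on the diagonal bound alone. The same elementary computation will be reused later whenever the empirical embeddings $m_{\x^{(i)}}=m_{\widehat{P}_\X^{(i)}}$ occur, since each $\widehat{P}_\X^{(i)}$ is again a probability measure in $\M(\X)$.
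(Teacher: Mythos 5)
Your proof is correct and follows essentially the same route as the paper's: bound $|k(s,x')|$ through the diagonal values from (A1) and integrate against the probability measures $P'$ and $P$. The only real difference is that you use the genuine Cauchy--Schwarz bound $|k(s,x')|\le\sqrt{k(s,s)\,k(x',x')}\le\kappa^2$ and so land on the sharper constant $\kappa^2$, whereas the paper's displayed chain passes through $|k(x,x')|\le|k(x,x)|\,|k(x',x')|$ (which itself only holds when the diagonal product is at least one) to reach $\kappa^4$; in either version the stated constant $\kappa^4$ is recovered only under the harmless normalization $\kappa\ge 1$, a caveat you correctly flag and which is equally implicit in the paper's own argument.
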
 
\noindent
The next Lemma~\ref{lemma:variance_bound_for_ridge_regression} follows from~\cite[Theorem~2]{guo2017learning}.

\begin{lemma}
    \label{lemma:variance_bound_for_ridge_regression}
    Let $P\in\text{Supp}(E)$, $n \in \mathbb{N}$ and assume (A2)--(A3).
    Then, for $\lambda=n^{-\frac{1}{c_3+1} }$, we have that
    \begin{align}
    \label{eq:variance_bound_for_ridge_regression}
         \int_{(\X\times\Y)^{n}} \norm{f_{\z}^{\lambda}-f_{P}}_{L^2(\X)}^2 \diff P^n(\z) \leq (c^{*})^2 c_8 n^{-\frac{1}{c_3+1} },
    \end{align}
    for some $c_8>0$ that is independent from $P$.
\end{lemma}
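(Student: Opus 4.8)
The plan is to reduce the claim to the convergence-rate bound for kernel ridge regression of~\cite{guo2017learning} and then to pass from the $L^2(P)$-error to the $L^2(\X)$-error via assumption (B1).

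First I would match our objects to those of~\cite{guo2017learning}. The estimator $f_\z^{\lambda}$ of Eq.~\eqref{eq:domain_specific_ridge_regression} is the kernel ridge regression estimator in $\Hs_k$ for the sample $\z$ drawn from $P$, where $k\in\mathcal{K}$ is the kernel furnished by (A2), and the operator $G_{k,P}$ from (A2) is precisely the covariance operator on $L^2(P)$ of this regression problem. Then (A2), i.e.~$f_P=G_{k,P}^{1/2}g_P$ with $\norm{g_P}_{L^2(P)}\le c_1$, is exactly a source condition with regularity index $r=\tfrac12$ and radius $c_1$, and (A3), i.e.~$\gamma_{G_{k,P}}(\lambda)\le c_2\lambda^{-c_3}$, is the standard effective-dimension decay; both hold with constants independent of $P$. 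Together with the uniform kernel bound $\kappa$ from (A1) and the uniform bound on $|y|$ coming from compactness of $\Y$, every quantity entering the bound of~\cite{guo2017learning} is uniform over $P\in\text{Supp}(E)$.

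Next I would apply~\cite[Theorem~2]{guo2017learning} with $r=\tfrac12$, effective-dimension exponent $c_3$, and the balancing choice $\lambda=n^{-1/(1+c_3)}=n^{-1/(2r+c_3)}$, which yields
\begin{align*}
\int_{(\X\times\Y)^n}\norm{f_\z^{\lambda}-f_P}_{L^2(P)}^2\diff P^n(\z)\le c_8\, n^{-\frac{1}{1+c_3}}
\end{align*}
for a constant $c_8$ depending only on $c_1,c_2,\kappa$ and the bound on $\Y$, hence independent of $P$. (If that theorem is quoted in high-probability form, the same bound in expectation follows by integrating its tail over the confidence parameter.)

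Finally, since $f_\z^{\lambda}-f_P\in L^2(P)$ for every $\z$ (both functions are bounded on the compact $\X$), assumption (B1) gives $\norm{f_\z^{\lambda}-f_P}_{L^2(\X)}^2\le (c^{*})^2\,\norm{f_\z^{\lambda}-f_P}_{L^2(P)}^2$ pointwise in $\z$; integrating this inequality against $P^n$ and inserting the display above produces the asserted bound. The only real work here is bookkeeping: ensuring that the constants handed to~\cite{guo2017learning} are exactly the uniform ones provided by (A1)--(A3), so that the resulting $c_8$ genuinely does not depend on $P$. I do not expect any analytic obstacle beyond carefully matching the hypotheses of that theorem with (A2)--(A3).
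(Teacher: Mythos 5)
Your proposal is correct and follows essentially the same route as the paper's proof: apply \cite[Theorem~2]{guo2017learning} with $r=\tfrac12$ and Tikhonov regularization under (A2)--(A3) to get a uniform-in-$P$ constant $c_8$ for the $L^2(P)$-error, then use (B1) to convert to the $L^2(\X)$-norm. The only difference is that the paper additionally writes out the explicit value of $c_8$ from the last line of Guo et al.'s proof, which is bookkeeping you correctly identified as the remaining work.
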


\begin{lemma}[\cite{wolfer2022variance}, Section 2, Remark 2.1]
    \label{lemma:estimation_bound_mean_embeddings}
    For $P\in\M(\X\times\Y)$, $n \in \mathbb{N}$ and $k \in \mathcal{K}$, we have that
    \begin{align}
    \label{eq:estimation_bound_mean_embeddings}
        \int_{\X^n} \norm{m_{\x}-m_{P_\X}}_{\Hs_{k}}^2 \diff P_\X^n(\x) \leq \frac{\kappa^2}{n}.
    \end{align}
\end{lemma}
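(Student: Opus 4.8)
The plan is to expand the empirical embedding as an average of i.i.d.\ elements of $\Hs_k$ and to exploit the orthogonality, after centering, of the summands. Write $\x=(x_j)_{j=1}^n\in\X^n$ with the $x_j$ drawn i.i.d.\ from $P_\X$, and recall that by definition $m_{\x}=m_{\widehat{P}_\X}=\frac1n\sum_{j=1}^n k(\cdot,x_j)$, while $m_{P_\X}=\int_\X k(\cdot,x')\diff P_\X(x')$ is a Bochner integral in $\Hs_k$ (well defined because $k$ is continuous on the compact $\X$ and bounded by $\kappa^2$ via (A1), so that $x'\mapsto k(\cdot,x')$ is a bounded, continuous, hence Bochner-integrable, $\Hs_k$-valued map). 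Setting $\xi_j:=k(\cdot,x_j)-m_{P_\X}\in\Hs_k$, each $\xi_j$ has mean zero in $\Hs_k$ and one has the identity $m_{\x}-m_{P_\X}=\frac1n\sum_{j=1}^n\xi_j$.

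Next I would compute the second moment by expanding the square of the norm:
\[
\int_{\X^n}\norm{m_{\x}-m_{P_\X}}_{\Hs_k}^2\diff P_\X^n(\x)
=\frac1{n^2}\sum_{j,l=1}^n\int_{\X^n}\innerpro{\xi_j,\xi_l}_{\Hs_k}\diff P_\X^n(\x).
\]
For $j\neq l$ the integral factorizes by independence of $x_j$ and $x_l$ and by continuity and bilinearity of the inner product, giving $\innerpro{\int_\X\xi_j\diff P_\X,\int_\X\xi_l\diff P_\X}_{\Hs_k}=0$ since each centered integral vanishes; only the $n$ diagonal terms survive, leaving $\frac1n\int_\X\norm{\xi_1}_{\Hs_k}^2\diff P_\X$.

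Finally I would bound the diagonal term by a bias--variance split together with (A1): by the reproducing property $\norm{k(\cdot,x)}_{\Hs_k}^2=k(x,x)\le\kappa^2$ for every $x\in\X$ and every $k\in\mathcal{K}$, so
\[
\int_\X\norm{\xi_1}_{\Hs_k}^2\diff P_\X
=\int_\X\norm{k(\cdot,x)}_{\Hs_k}^2\diff P_\X(x)-\norm{m_{P_\X}}_{\Hs_k}^2\le\kappa^2,
\]
which combined with the previous display yields the claimed bound $\kappa^2/n$. There is essentially no serious obstacle; the only points deserving a line of care are the measurability and Bochner-integrability of $x\mapsto k(\cdot,x)$ in $\Hs_k$, so that exchanging the $\Hs_k$-inner product with the integral is legitimate, and the observation that the bound is uniform over $P\in\M(\X\times\Y)$ and $k\in\mathcal{K}$ precisely because $\kappa$ in (A1) is.
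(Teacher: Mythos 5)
Your proof is correct and is exactly the standard variance-of-the-empirical-mean argument in $\Hs_k$ that the cited reference (\cite{wolfer2022variance}, Remark 2.1) relies on; the paper itself gives no proof of this lemma and simply imports it. The centering, the vanishing of cross terms by independence, and the bound $\mathbb{E}\norm{\xi_1}_{\Hs_k}^2\le \mathbb{E}[k(x,x)]\le\kappa^2$ via (A1) all check out and yield the stated $\kappa^2/n$.
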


\noindent
Lemma~\ref{lemma:estimation_bound_mean_embeddings} leads to the following variance bound.

\begin{lemma}
    \label{lemma:variance_mapped_mean_embedding}
  For $P\in\M(\X\times\Y)$, $n \in \mathbb{N}$ and $k \in \mathcal{K}$, under assumption (A4), we have that
    \begin{align}
        \label{eq:variance_mapped_mean_embedding}
        \int_{\X^n}
        \norm{G\cdot(m_{\x}-m_{P_\X})}_{L^2(\X)}^2
        \diff P_\X^n(\x) \leq \frac{c_4 \kappa^6}{n}.
    \end{align}
\end{lemma}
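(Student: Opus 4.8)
The plan is to reduce the claimed bound to an estimate of the integral part of $G$ against the RKHS norm of the mean-embedding error, and then to invoke Lemma~\ref{lemma:estimation_bound_mean_embeddings}. First I would use Eq.~\eqref{eq:integral_form_of_G}: the intercept $a_0$ does not depend on the embedded measure and hence drops out of the difference, so for every $\x\in\X^n$ and $P\in\M(\X\times\Y)$ and almost every $t\in\X$,
\begin{align*}
  \bigl(G\cdot(m_{\x}-m_{P_\X})\bigr)(t)=\int_\X\bigl(m_{\x}(x)-m_{P_\X}(x)\bigr)\beta(t,x)\diff x=\innerpro{h,\beta(t,\cdot)}_{L^2(\X)},
\end{align*}
where $h:=m_{\x}-m_{P_\X}$ lies in $\Hs_k$ because both $m_{\x}$ and $m_{P_\X}$ do (by (A1), $k$ is bounded and continuous). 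Hence $\norm{G\cdot(m_{\x}-m_{P_\X})}_{L^2(\X)}^2=\int_\X\innerpro{h,\beta(t,\cdot)}_{L^2(\X)}^2\diff t$, and the task reduces to bounding this by $\kappa^4 c_4\norm{h}_{\Hs_k}^2$.

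For the pointwise-in-$t$ estimate I would move from the $L^2(\X)$ pairing to the $\Hs_k$ pairing. Combining the reproducing property $\innerpro{k(\cdot,x),h}_{\Hs_k}=h(x)$ with Fubini gives $\innerpro{h,g}_{L^2(\X)}=\innerpro{h,G_kg}_{\Hs_k}$ for every $g\in\Hs_k$, with $G_k$ as in Subsection~\ref{subsec:notation}. Applying this with $g=\beta(t,\cdot)\in\Hs_k$ (valid by (A4)) and Cauchy--Schwarz in $\Hs_k$,
\begin{align*}
  \innerpro{h,\beta(t,\cdot)}_{L^2(\X)}^2\le\norm{h}_{\Hs_k}^2\,\norm{G_k\beta(t,\cdot)}_{\Hs_k}^2\le\norm{h}_{\Hs_k}^2\,\norm{G_k}_{\Hs_k\to\Hs_k}^2\,\norm{\beta(t,\cdot)}_{\Hs_k}^2.
\end{align*}
Since $\X$ carries Lebesgue measure one and $|k|\le\kappa^2$ (by (A1) together with positive definiteness), $\norm{G_k}_{L^2(\X)\to L^2(\X)}\le\text{Tr}(G_k)=\int_\X k(x,x)\diff x\le\kappa^2$, and because $G_k$ has the same nonzero spectrum when regarded as an operator on $\Hs_k$ (Mercer), also $\norm{G_k}_{\Hs_k\to\Hs_k}\le\kappa^2$. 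Integrating in $t$ and using $\int_\X\norm{\beta(t,\cdot)}_{\Hs_k}^2\diff t\le c_4$ from (A4) then yields $\norm{G\cdot(m_{\x}-m_{P_\X})}_{L^2(\X)}^2\le\kappa^4 c_4\norm{h}_{\Hs_k}^2$.

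Finally I would integrate this pointwise-in-$\x$ inequality against $P_\X^n$ and apply Lemma~\ref{lemma:estimation_bound_mean_embeddings}, which bounds $\int_{\X^n}\norm{m_{\x}-m_{P_\X}}_{\Hs_k}^2\diff P_\X^n(\x)$ by $\kappa^2/n$; this gives exactly $\int_{\X^n}\norm{G\cdot(m_{\x}-m_{P_\X})}_{L^2(\X)}^2\diff P_\X^n(\x)\le c_4\kappa^6/n$.

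I do not expect a genuine obstacle here; the only steps requiring minor care are the identity $\innerpro{h,g}_{L^2(\X)}=\innerpro{h,G_kg}_{\Hs_k}$ and the transfer of the operator-norm bound for $G_k$ from $L^2(\X)$ to $\Hs_k$, both of which are routine on the compact $\X$ with a continuous kernel. If one prefers to avoid the $\Hs_k\to\Hs_k$ operator norm entirely, an equivalent route is to expand $h$ and $\beta(t,\cdot)$ in the Mercer eigenbasis $(\phi_j)$ with eigenvalues $(\mu_j)$ and estimate $\innerpro{h,\beta(t,\cdot)}_{L^2(\X)}^2\le\mu_1^2\,\norm{h}_{\Hs_k}^2\,\norm{\beta(t,\cdot)}_{\Hs_k}^2$ directly, with $\mu_1\le\kappa^2$; this produces the same constant $c_4\kappa^6$.
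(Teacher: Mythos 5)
Your proof is correct and follows essentially the same route as the paper's: drop the intercept, bound the pairing of $m_{\x}-m_{P_\X}$ with $\beta(t,\cdot)$ by Cauchy--Schwarz, integrate in $t$ using (A4), and finish with Lemma~\ref{lemma:estimation_bound_mean_embeddings}. The only (cosmetic) difference is that the paper applies Cauchy--Schwarz directly in $L^2(\X)$ and then converts both resulting $L^2$ norms to $\Hs_k$ norms via $\norm{f}_{L^2(\X)}\le\kappa\norm{f}_{\Hs_k}$, whereas you move the pairing into $\Hs_k$ through $G_k$ and pay $\norm{G_k}_{\Hs_k\to\Hs_k}\le\kappa^2$; both give the same factor $\kappa^4$ and the same final constant $c_4\kappa^6/n$.
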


\subsection{Convergence Rates Result}
\label{subsec:convergence_rates_result}

Now we continue our investigations concerning Eq.~\eqref{eq:error_rates_o_notation}.
We need to analyze the difference 
\begin{align}\label{eq:error_decomp_1}
\mathcal{E}^\infty(g_{\z^{(1)},\ldots,\z^{(N)}}^{\lambda_1,\ldots,\lambda_N,\lambda}) &- \inf_{g:\M(\X)\to\left\{f:\X\to\mathbb{R}\right\}}\mathcal{E}^{\infty}(g)=\nonumber 
\\= &\int_{\M(\X)} \int_{\X\times\Y} \left(G_{\z^{(1)},\ldots,\z^{(N)}}^{\lambda_1,\ldots,\lambda_N,\lambda} m_{P_\X}(x)-y\right)^2\diff P(x,y) \diff E(P)\nonumber
\\ &\quad\quad\quad-\int_{\M(\X)} \int_{\X\times\Y} \left(f_{P}(x)-y\right)^2\diff P(x,y) \diff E(P) \nonumber \\
= &\int_{\M(\X)}  \norm{G_{\z^{(1)},\ldots,\z^{(N)}}^{\lambda_1,\ldots,\lambda_N,\lambda} m_{P_\X}(x)-f_{P}(x)}_{L^2(P)}^2 \diff P(\x) \diff E(P)\nonumber \\
=&\int_{\M(\X)} \int_{\X} \norm{G_{\z^{(1)},\ldots,\z^{(N)}}^{\lambda_1,\ldots,\lambda_N,\lambda} m_{P_\X}-G m_{P_\X}}_{L^2(P)}^2  \diff P(\x) \diff E(P),
\end{align}
where the second equality follows from the bias-variance decomposition (see e.g. \citet[Proposition~1]{cucker2002mathematical}) and the last equality from our linear operator Ansatz in Subsection~\ref{subsec:linear_ansatz}.
In order to further analyze Eq.~\eqref{eq:error_decomp_1}, we use assumption (B1) from Subsection~\ref{subsec:assumptions} and obtain
\begin{align} \label{eq:error_decomp_2}
(c_{*})^2 \norm{G_{\z^{(1)},\ldots,\z^{(N)}}^{\lambda_1,\ldots,\lambda_N,\lambda}-G}^2_{\text{Op}(L^2(\X))}
  \norm{m_{P_\X}}_{L^2(\X)}^2,
\end{align}
where $\norm{\cdot}_{\text{Op}(L^2(\X))}$ denotes the operator norm on $L^2(\X)$.
As $\norm{m_{P_\X}}_{L^2(\X)}$ can be uniformly bounded (using Lemma~\ref{lemma:uniform bound_mean_embeddings}), we only need to care about $\norm{G_{\z^{(1)},\ldots,\z^{(N)}}^{\lambda_1,\ldots,\lambda_N,\lambda}-G}^2_{\text{Op}(L^2(\X))}$. This Hilbert-Schmidt norm relates to the $L^2(\X)$-norm of the difference between the corresponding slope functions, which is used by the following key lemma, together with assumption (B2) and methods from~\cite{tong2022non}.

\begin{lemma}
\label{lemma:operator_norm}
Consider the algorithm introduced in Subsection \ref{subsec:algorithm}.
Under the assumptions stated in Subsection \ref{subsec:assumptions}, if we set $\lambda=N^{-\frac{1}{1+c_{6}}}$ and $\lambda_i=n^{-\frac{1}{1+c_{3}}}$ for $i=1,...,N$, we have that for any $0<\delta<1$ with probability $1-\delta$: 
\begin{align*}
\norm{G_{\z^{(1)},\ldots,\z^{(N)}}^{\lambda_1,\ldots,\lambda_N,\lambda}-G}^2_{\mathrm{Op}(L^2(\X))} \le c_7 C(\Bar{\sigma}^2)\frac{\log \frac{4}{\delta}}{\delta^2} N^{-\frac{1}{1+c_6}},
\end{align*}
where 
 \begin{align*}
 C(\Bar{\sigma}^2)&=2 \left(\frac{\Bar{\sigma}^2\left(2 \kappa^5   \left(\kappa^5  +\sqrt{c_5}\right)+1\right)^6}{\kappa^{10} }+\left(2 \kappa^5  \left(\kappa^5 +\sqrt{c_5}\right)+1\right)^2 \norm{f_0}_{L^2(\X^2)}^2\right),
 \end{align*}
 and
\begin{align} \label{eq:bound_variance_total}
\Bar{\sigma}^2 = (c^{*})^2 c_8 n^{-\frac{1}{c_3+1} }+ c_4 \kappa^6 n^{-1} +\sigma^2.
\end{align}
\end{lemma}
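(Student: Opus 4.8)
Throughout write $\widehat{G}:=G_{\z^{(1)},\ldots,\z^{(N)}}^{\lambda_1,\ldots,\lambda_N,\lambda}$ and $\widehat{\beta}:=\beta_{\z^{(1)},\ldots,\z^{(N)}}^{\lambda_1,\ldots,\lambda_N,\lambda}$. The plan is to view the second step of the algorithm, Eq.~\eqref{eq:functional_regression_of_slope}, as a functional linear regression problem in the sense of~\cite{tong2022non}, to apply the corresponding finite-sample bound for its excess prediction risk, and then to translate that bound back into the operator norm via assumption~(B2).

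\emph{Step 1: from the operator norm to the slope distance.} By~(A6) we have $a_0\equiv 0$, so both $G$ and $\widehat{G}$ are integral operators on $L^2(\X)$ with kernels $\beta$ and $\widehat{\beta}$, respectively. Combining $\norm{\cdot}_{\mathrm{Op}(L^2(\X))}\le\norm{\cdot}_{\mathrm{HS}}$ with the identity $\norm{T}_{\mathrm{HS}}^2=\norm{K_T}_{L^2(\X^2)}^2$ for the integral operator with kernel $K_T$ yields $\norm{\widehat{G}-G}^2_{\mathrm{Op}(L^2(\X))}\le\norm{\widehat{\beta}-\beta}^2_{L^2(\X^2)}$, and~(B2) further bounds this by $c_7\innerpro{G_C(\widehat{\beta}-\beta),\widehat{\beta}-\beta}_{L^2(\X^2)}$. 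The quantity $\innerpro{G_C(\widehat{\beta}-\beta),\widehat{\beta}-\beta}_{L^2(\X^2)}$ is exactly the excess prediction risk of the functional regression~\eqref{eq:functional_regression_of_slope} relative to the predictor covariance kernel $C$, so it suffices to bound it by $C(\bar\sigma^2)\,\frac{\log(4/\delta)}{\delta^2}\,N^{-1/(1+c_6)}$ with probability $1-\delta$.

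\emph{Step 2: reduction to a clean functional regression model.} Using the Ansatz $f_{P^{(i)}}=G\,m_{P^{(i)}_\X}+\varepsilon_i$ from Subsection~\ref{subsec:linear_ansatz}, decompose the response in~\eqref{eq:functional_regression_of_slope} as $f_{\z^{(i)}}^{\lambda_i}=G\,m_{\x^{(i)}}+\eta_i$ with $\eta_i:=\varepsilon_i+(f_{\z^{(i)}}^{\lambda_i}-f_{P^{(i)}})-G\,(m_{\x^{(i)}}-m_{P^{(i)}_\X})$. By~(B4) together with the zero-mean property of $\varepsilon$, $\mathbb{E}[\eta_i]=0$; by~(B3), $\eta_i$ is independent of the predictor $m_{\x^{(i)}}$ (whose almost-sure boundedness follows from~(A1)). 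Its second moment is controlled by combining Lemma~\ref{lemma:variance_bound_for_ridge_regression} (ridge-regression error, for the choice $\lambda_i=n^{-1/(1+c_3)}$), Lemma~\ref{lemma:variance_mapped_mean_embedding} (mean-embedding error), and the finite variance $\sigma^2<\infty$ of $\varepsilon$, which gives $\mathbb{E}\norm{\eta_i}_{L^2(\X)}^2\le\bar\sigma^2$ with $\bar\sigma^2$ as in Eq.~\eqref{eq:bound_variance_total}. Hence $(m_{\x^{(i)}},f_{\z^{(i)}}^{\lambda_i})_{i=1}^N$ are $N$ i.i.d.\ observations from a functional linear model with RKHS slope $\beta=G_k^{1/2}f_0$ (by~(A4), with $k$ universal), predictor covariance kernel $C$, and effective noise level $\bar\sigma^2$.

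\emph{Step 3 and the main obstacle.} To the model of Step~2 we apply the finite-sample bound of~\cite{tong2022non} for the regularized estimator~\eqref{eq:functional_regression_of_slope}: under the kernel bound~(A1), the source condition $\beta=G_k^{1/2}f_0$ with bound~(A4), and the effective-dimension bound~(A5), $\gamma_{T_k}(\lambda)\le c_5\lambda^{-c_6}$ for $T_k=G_k^{1/2}G_CG_k^{1/2}$, the choice $\lambda=N^{-1/(1+c_6)}$ equates the squared bias (of order $\lambda\norm{f_0}_{L^2(\X^2)}^2$) with the variance (of order $\bar\sigma^2\gamma_{T_k}(\lambda)/N=\mathcal{O}(\bar\sigma^2\lambda^{-c_6}/N)$), and the excess prediction risk is then $\mathcal{O}(N^{-1/(1+c_6)})$ with a Markov/Chebyshev-type confidence factor $\frac{\log(4/\delta)}{\delta^2}$; tracking the constants (the $\kappa$-powers from~(A1), $c_5$ from~(A5), $\bar\sigma^2$, and $\norm{f_0}_{L^2(\X^2)}^2$) through that bound produces exactly the stated $C(\bar\sigma^2)$. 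Combined with Step~1 this gives the lemma. The delicate point is precisely this reduction: \cite{tong2022non} is stated for a functional regression model with exact responses and exact predictors, whereas here both $f_{\z^{(i)}}^{\lambda_i}$ and $m_{\x^{(i)}}$ are themselves estimated, and assumptions~(B3)--(B4) are exactly what is needed to fold the two extra error sources into an i.i.d.\ zero-mean noise of variance $\bar\sigma^2$ so that~\cite{tong2022non} is applicable; once this is in place, the remaining work (the explicit constant $C(\bar\sigma^2)$, and checking that $\lambda_i=n^{-1/(1+c_3)}$ is an admissible choice in Lemma~\ref{lemma:variance_bound_for_ridge_regression}) is routine bookkeeping.
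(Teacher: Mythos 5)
Your proposal is correct and follows essentially the same route as the paper: reduce the operator norm to the Hilbert--Schmidt/slope distance, invoke coercivity (B2) with $g=\widehat{\beta}-\beta$ to pass to the prediction risk $\langle G_C(\widehat{\beta}-\beta),\widehat{\beta}-\beta\rangle=\norm{T_k^{1/2}(f_0-f_{N,\lambda})}^2$, fold the two estimation errors and the functional noise into a single zero-mean noise of variance $\bar\sigma^2$ via (B3)--(B4) and Lemmas~\ref{lemma:variance_bound_for_ridge_regression} and~\ref{lemma:variance_mapped_mean_embedding}, and then apply the bounds of~\cite{tong2022non} (which the paper spells out in Corollary~\ref{cor:func_reg}). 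You correctly identify the reduction to the clean functional-regression model as the one non-routine step.
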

Applying Lemma~\ref{lemma:operator_norm} to Eq.~\eqref{eq:error_decomp_2}, and combining it with the variance bounds in lemma~\ref{lemma:variance_bound_for_ridge_regression} and Lemma~\ref{lemma:variance_mapped_mean_embedding} results in our main finite sample bound.

\begin{theorem}
Consider the algorithm introduced in Subsection \ref{subsec:algorithm}.
Under the assumptions stated in Subsection \ref{subsec:assumptions}, if we set $\lambda=N^{-\frac{1}{1+c_{6}}}$ and $\lambda_i=n^{-\frac{1}{1+c_{3}}}$ for $i=1,...,N$, we have that for any $0<\delta<1$ with probability $1-\delta$:
\begin{align} \label{eq:main_bound}
\mathcal{E}^\infty(g_{\z^{(1)},\ldots,\z^{(N)}}^{\lambda_1,\ldots,\lambda_N,\lambda})- \inf_{g:\M(\X)\to\left\{f:\X\to\mathbb{R}\right\}}\mathcal{E}^{\infty}(g) \le C'(n) \frac{\log \frac{4}{\delta}}{\delta^2} N^{-\frac{1}{1+c_6}},
\end{align}
where
\begin{align}
\label{eq:main_constant}
C'(n)=c_{*}^2  \kappa^8 C(\Bar{\sigma}^2)=~&\frac{2\kappa^8 c_{*}^2 c_7 ((c^{*})^2 c_8 n^{-\frac{1}{c_3+1} }+ c_4 \kappa^6 n^{-1} +\sigma^2 )(2 \kappa^5   (\kappa^5  +\sqrt{c_5})+1)^6}{\kappa^{10} }\nonumber\\
&+2\kappa^8 c_{*}^2 c_7 (2 \kappa^5  (\kappa^5 +\sqrt{c_5})+1)^2 \norm{f_0}_{L^2(\X^2)}^2.
\end{align}
\end{theorem}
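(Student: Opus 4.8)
The plan is to assemble the final theorem by chaining together the error decomposition already laid out in Eq.~\eqref{eq:error_decomp_1}--\eqref{eq:error_decomp_2} with the operator-norm bound of Lemma~\ref{lemma:operator_norm}. Concretely, starting from Eq.~\eqref{eq:error_decomp_1}, I would push the $L^2(P)$-norm through assumption (B1) to land at the quantity in Eq.~\eqref{eq:error_decomp_2}, namely $(c_*)^2 \norm{G_{\z^{(1)},\ldots,\z^{(N)}}^{\lambda_1,\ldots,\lambda_N,\lambda}-G}^2_{\mathrm{Op}(L^2(\X))}\,\norm{m_{P_\X}}_{L^2(\X)}^2$, integrated against $E$. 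The mean-embedding factor is uniformly bounded by $\kappa^4$ via Lemma~\ref{lemma:uniform bound_mean_embeddings}, so that $\norm{m_{P_\X}}_{L^2(\X)}^2\le\kappa^8$; since this bound is uniform in $P$, the integral over $\M(\X)$ with respect to $E$ costs nothing (as $E$ is a probability measure), and the operator-norm term does not depend on $P$ either.

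The second step is purely substitution: apply Lemma~\ref{lemma:operator_norm} with the stated choices $\lambda=N^{-1/(1+c_6)}$, $\lambda_i=n^{-1/(1+c_3)}$, which holds with probability at least $1-\delta$, to replace $\norm{G_{\z^{(1)},\ldots,\z^{(N)}}^{\lambda_1,\ldots,\lambda_N,\lambda}-G}^2_{\mathrm{Op}(L^2(\X))}$ by $c_7\,C(\bar\sigma^2)\,\frac{\log(4/\delta)}{\delta^2}\,N^{-1/(1+c_6)}$. Multiplying the two uniform constants $c_*^2$ and $\kappa^8$ into $c_7 C(\bar\sigma^2)$ gives exactly $C'(n)=c_*^2\kappa^8 C(\bar\sigma^2)$, and expanding $C(\bar\sigma^2)$ with $\bar\sigma^2$ as in Eq.~\eqref{eq:bound_variance_total} reproduces the closed form in Eq.~\eqref{eq:main_constant}. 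This yields the claimed bound $C'(n)\,\frac{\log(4/\delta)}{\delta^2}\,N^{-1/(1+c_6)}$ with probability at least $1-\delta$.

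I should be slightly careful about one bookkeeping point: the error decomposition in Eq.~\eqref{eq:error_decomp_1} already uses the bias-variance decomposition and the linear operator Ansatz of Subsection~\ref{subsec:linear_ansatz} (in particular assumption (B4) on unbiasedness, so that the cross term vanishes and the noise $\varepsilon$ contributes only through the infimum term on the right-hand side). I would state explicitly that, because $\varepsilon$ has zero mean and is independent of $P_\X$ (via (B3)), the difference of the two risk integrals collapses to the integrated squared $L^2(P)$-distance between $G_{\z^{(1)},\ldots,\z^{(N)}}^{\lambda_1,\ldots,\lambda_N,\lambda}m_{P_\X}$ and $G m_{P_\X}=f_P-\varepsilon$ minus $\varepsilon$-terms, exactly as displayed. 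Given that Eq.~\eqref{eq:error_decomp_1}--\eqref{eq:error_decomp_2} are already in the excerpt, the theorem's proof is essentially a two-line citation of Lemma~\ref{lemma:operator_norm} followed by this constant-collecting.

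The only real obstacle — and it is entirely localized in Lemma~\ref{lemma:operator_norm}, not in the theorem itself — is establishing the operator-norm bound, which requires relating the Hilbert--Schmidt / operator norm of $G_{\z^{(1)},\ldots,\z^{(N)}}^{\lambda_1,\ldots,\lambda_N,\lambda}-G$ to the $L^2(\X^2)$-norm of the slope difference through assumption (B2) (coercivity of $G_C$), and then invoking the functional-regression finite-sample bounds of~\cite{tong2022non} with the total output-noise variance $\bar\sigma^2$ fed in from Lemma~\ref{lemma:variance_bound_for_ridge_regression} and Lemma~\ref{lemma:variance_mapped_mean_embedding}. Since that lemma is granted to me, assembling the theorem is routine; I would therefore keep the theorem's proof short, emphasizing only the (B1) step, the uniform $\kappa^8$ bound on the mean embedding, the harmlessness of integrating against $E$, and the collection of constants into $C'(n)$.
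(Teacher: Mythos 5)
Your proposal is correct and follows essentially the same route as the paper: the paper's proof of the theorem is exactly the chain Eq.~\eqref{eq:error_decomp_1}$\to$\eqref{eq:error_decomp_2} via (B1), the uniform bound $\norm{m_{P_\X}}_{L^2(\X)}^2\le\kappa^8$ from Lemma~\ref{lemma:uniform bound_mean_embeddings} (applicable since $\X$ carries Lebesgue measure one), and substitution of Lemma~\ref{lemma:operator_norm}, with the variance bounds of Lemmas~\ref{lemma:variance_bound_for_ridge_regression} and~\ref{lemma:variance_mapped_mean_embedding} entering only through $\bar\sigma^2$ inside $C(\bar\sigma^2)$. Your constant-collection $C'(n)=c_*^2\kappa^8 C(\bar\sigma^2)$ matches Eq.~\eqref{eq:main_constant} exactly.
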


\section{Numerical Example}

The goal of our numerical example is (a) to underpin the potential of the proposed functional regression approach for domain generalization, and, (b) to illustrate an implementation of the algorithm in Subsection~\ref{subsec:algorithm}, which we also provide in python.

\paragraph*{Data Generation}
We generated $N=100$ input source samples $(\x^{(i)})_{i=1}^N\in (\X)^{n\times N}$, $\X=[0,1]$, each of size $n=100$, drawn independently from $N$ different truncated Normal distributions $(P_\X^{(i)})_{i=1}^N$, see Eq.~\eqref{eq:source_samples}.
The values of the truncated Normal distributions are generated to lie in the compact interval $[\mu^{(i)}-0.3, \mu^{(i)}+0.3]$ with means $\mu^{(i)}:=\int_\X x\diff P_\X^{(i)}(x)$ generated independent and uniformly distributed in the interval $[0.3,0.7]$ and variances generated in the interval $[0.025, 0.125]$, see Figure~\ref{fig:a}.
The outputs $(\y^{(i)})_{i=1}^N\in (\mathbb{R})^{n\times N}$, $\Y=\mathbb{R}$, corresponding to the inputs $(\x^{(i)})_{i=1}^N\in \mathbb{R}^{n\times N}$, are generated according to the equation
\begin{align*}
y_j^{(j)} = \frac{1}{10} \sin\!\left(\frac{3 x_j^{(i)}}{(\mu^{(i)})^2}\right) + \frac{9}{10} - \left(1.7 \left(x_j^{(i)}-\frac{1}{2}\right)\right)^2 + \epsilon_j^{(i)},
\end{align*}
where $(\epsilon_j^{(i)})_{i=1}^N$ are independently drawn from a Normal distribution with mean $0$ and variance $0.02$, see Figure~\ref{fig:b}.

\paragraph*{State-of-the-art Baselines}
Recall the goal of domain generalization to learn, from the source samples $(\x,\y)_{i=1}^N$ a model $g:\M(\X)\to\{f:\X\to\Y\}$ that performs well on data from new \textit{target} distributions.
In our example, this means, that $g$ needs to be computed from the data $\left(\x^{(i)},\y^{(i)}\right)_{i=1}^N$ described above.
Figure~\ref{fig:c} shows as a dashed line the prediction of a single ridge regression Eq.~\eqref{eq:domain_specific_ridge_regression} on the \textit{pooled} data $\left(x_j^{(i)},y_j^{(i)}\right)_{i\in\{1,\ldots,N\},j\in\{1,\ldots,n\}}$, which serves as the baseline representing the state of the art.
We also implemented the approach in~\cite{blanchard2021domain}, but it was not able to outperform the pooling procedure, although an intensive parameter search was performed as follows.

\sloppy
Following~\cite{blanchard2021domain}, the parameter $\lambda$ and the kernel of the RKHS of the ridge regression for pooling were chosen by $5$-fold cross-validation on a grid of values, $\lambda\in\{10^{-1},10^{-2},\ldots,10^{-6}\}$ and the kernel either as Gaussian kernel $k(x,y)=e^{-\frac{|x-y|^2}{2 l^2}}$ with $l\in\{1,5,10\}$ or as periodic kernel $k(x,y)=e^{-\frac{2 \sin^2(\pi |x-y|/p)}{l^2}}$ with $l\in\{1,10^{-1},10^{-2}\}, p=1$.
We followed the same procedure for choosing the parameters and the kernels used in our implementation of the marginal transfer approach of~\cite{blanchard2021domain} in Subsection~\ref{subsec:marginal_transfer_learning}.
In particular, we chose the kernel for computing the applied empirical kernel mean embeddings as Gaussian kernel with $l\in\{10^{-2},10^{-3}\}$ or the periodic kernel with $l=1, p=1$, we choose the kernel $k_\X$ either as the Gaussian kernel with $l\in\{10^{3},10^2,\ldots,10^{-3}\}$ or the periodic kernel with $l\in\{1,10^{-1},10^{-2}\}, p=1$, and the kernel between mean embeddings as the Gaussian kernel with $l\in\{1,10^3,10^{-3}\}$.
The regularization parameter was chosen as $\lambda\in\{10^{3},10^{2},\ldots,10^{-4}\}$.



 \begin{figure}
\centering
\subfigure[input distributions]{%
\resizebox*{5cm}{!}{\includegraphics{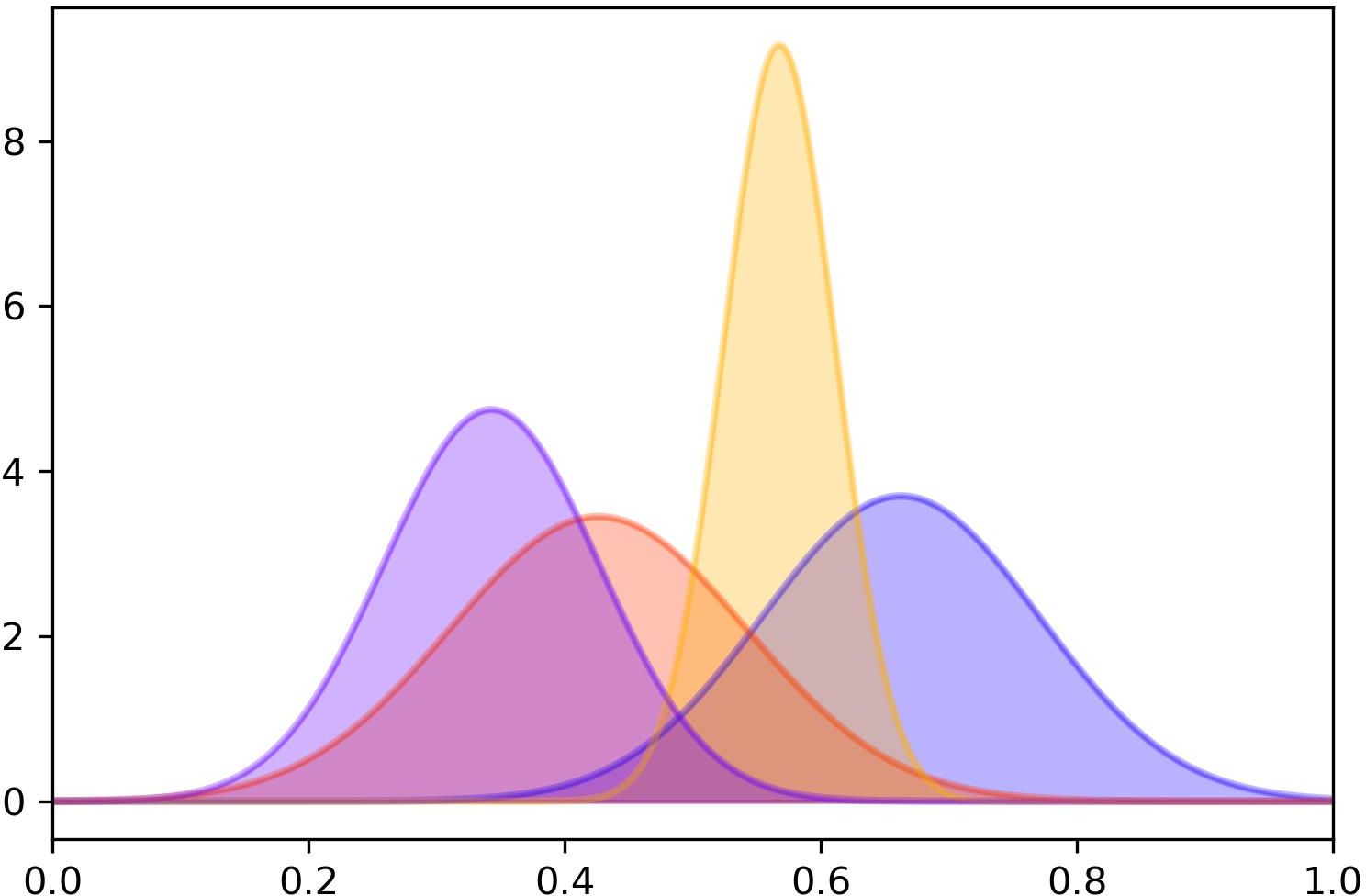}\label{fig:a}}}\hspace{5pt}
\subfigure[regression functions]{%
\resizebox*{5cm}{!}{\includegraphics{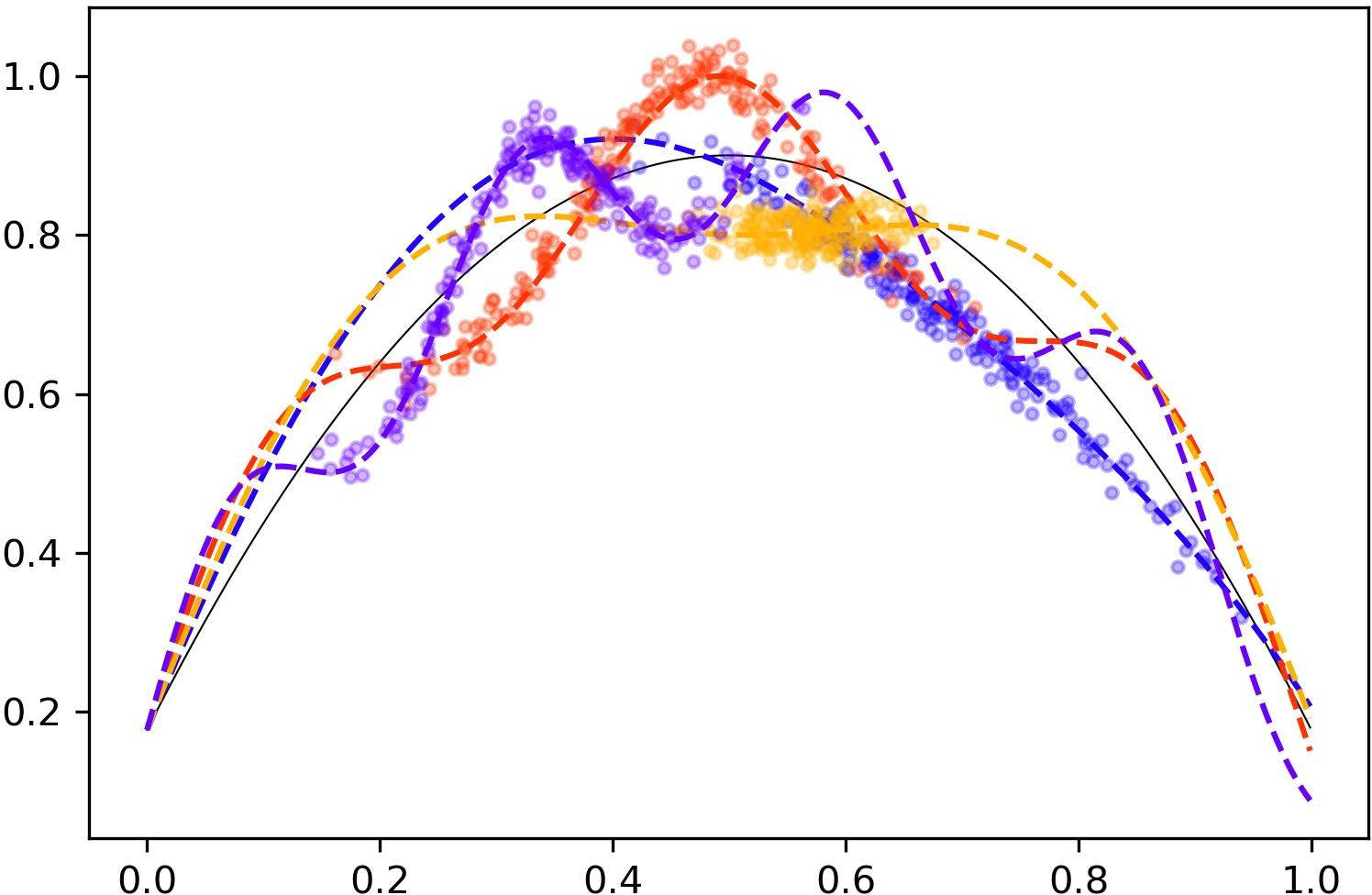}\label{fig:b}}}
\subfigure[regression on pooled data]{%
\resizebox*{5cm}{!}{\includegraphics{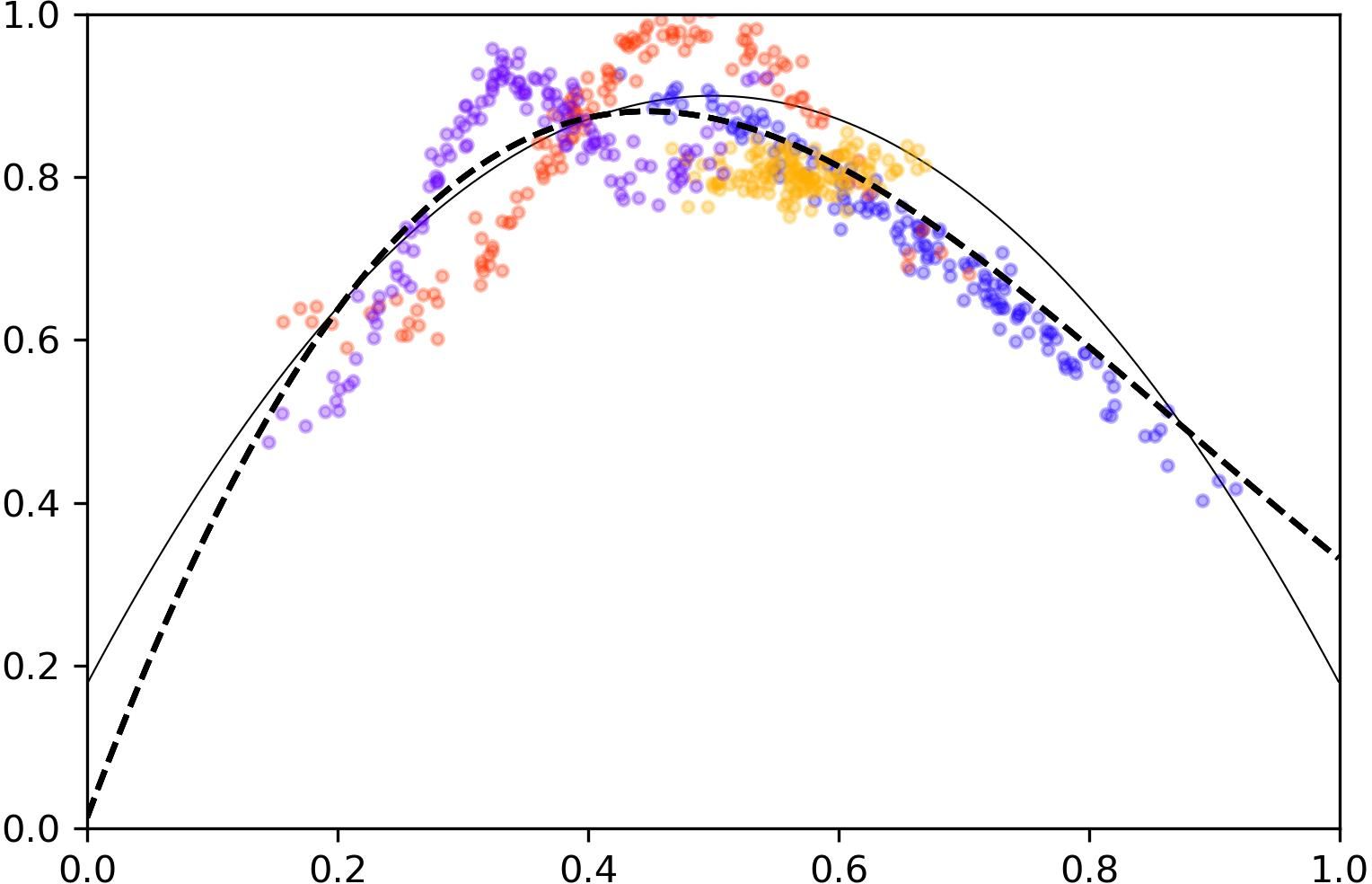}}\label{fig:c}}\hspace{5pt}
\subfigure[our predictions]{%
\resizebox*{5cm}{!}{\includegraphics{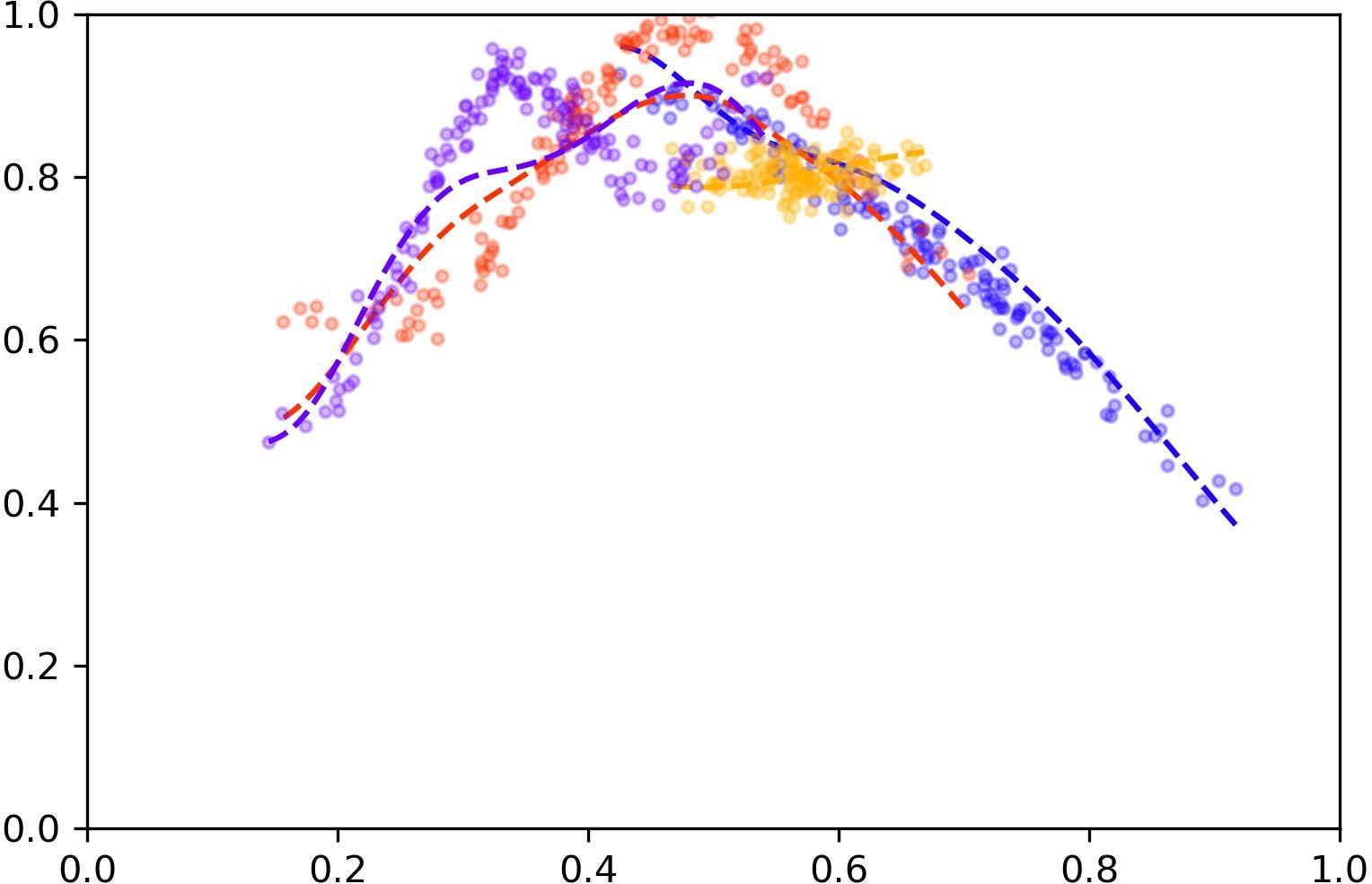}}
\label{fig:d}}
\caption{Our approach maps kernel mean embeddings of input distributions (a) to regression functions (b, dashed) and allows to outperform ridge regression on pooled data (c, dashed), in contrast to~\cite{blanchard2021domain} (also c, dashed), which is illustrated by four random test predictions of our approach (d, dashed).}
 \label{fig:numerical_illustration}
\end{figure}

\paragraph*{Implementation of Functional Regression Approach}
The implementation of our general functional regression algorithm described in Subsection~\ref{subsec:algorithm} has two main steps.

The first step is ridge regression on each source distribution $P^{(i)}$ to compute an estimator $f_{\z^{(i)}}^{\lambda_i}$.
In this step, the full potential of our approach can be seen, as for each $i\in\{1,\ldots,N\}$, a different RKHS can be learned using cross-validation with $\lambda_i\in\{10^{-1},\ldots,10^{-4}\}$ and kernels being either a Gaussian kernel with $l\in\{10^{-1},\ldots,10^{-4}\}$ or a periodic kernel with $l\in\{1,10^{-1},10^{-2}\}$ and $p\in\{1,2,3,5,10\}$.

The second step is penalized estimation according to Eq.~\eqref{eq:functional_regression_of_slope}. In this step, we follow step-by-step the Algorithm~1 in Section~4 of~\cite{tong2022non}.
This algorithm is essentially ridge regression, but it requires to estimate functions instead of scalar weights for the kernel sections in the solution granted by the representer theorem, see e.g.~\cite[Eq.~(16)]{scholkopf2001generalized}.
One particular difficulty which has to be mentioned at this point is the estimation of the $L^2([0,1])$-norm, which is required in this step.
This is done in~\cite{tong2022non} by discretization of the interval $[0,1]$, which is simple in our numerical example using $1000$ equally distributed grid points, but it suffers from the curse of dimensionality when the input data dimension increases.

\paragraph*{Result}
Figure~\ref{fig:d} shows some predictions (dashed lines) of our algorithm on new target distributions (not included in the source distributions).
Although the predictions are not perfect, they clearly outperform the dashed line of the simple pooling approach (and also the algorithm of~\cite{blanchard2021domain}).
We also measured the difference in empirical least-squares test error.
On average (over $20$ new test distributions),
the dashed regression functions illustrated in Figure~\ref{fig:b} (lower bound) achieve an error of $0.0008$, the gray parabola in Figure~\ref{fig:c} (upper bound) has an error of $0.0062$, and, the simple pooling approach (dashed in Figure~\ref{fig:c}) and marginal transfer learning don't achieve an error lower than $0.0042$.
Our implementation achieves the lowest error of $0.0029$.


\section{Conclusion and Future Work}

In this work, we study domain generalization as a problem of functional regression, i.e., regression with functional input and output, by directly learning the relationship between domain-specific marginal distributions of inputs and corresponding conditional distributions of outputs given inputs.
Our new conceptualization leads to an operator learning algorithm with finite sample bounds.
We also provide numerical illustrations showing its advantage of explicit computation of domain-specific predictors in possibly different reproducing kernel Hilbert spaces.

Our work aims at setting the ground for (to the best of our knowledge first) finite sample bounds for domain generalization.
However, it leaves an entirely exhaustive statistical analysis for future research, e.g., by taking the smoothness of the operator slope and more general non-linear functional regression, into account.

\section{Proofs}
\label{app:proofs}

\begin{proof}[Proof of Lemma \ref{lemma:uniform bound_mean_embeddings}]
It holds that
\begin{align*}
\norm{m_{P'}}_{L^2(P)}^2 &=\int_{\X} \left(\int_\X k(x,x')\diff P'(x')\right)^2 \diff P(x)\\
&\le \int_{\X} \left(\int_\X \left|k(x,x')\right|\diff P'(x')\right)^2 \diff P(x) \\
&\leq \int_{\X} \left(\int_\X |k(x,x)|\, |k(x',x')|\diff P'(x')\right)^2 \diff P(x)\\
&\le \left( \int_\X \sup_{x \in \X} \left|k(x,x)\right|^2 \diff P(x) \right)^2 \le \kappa^8.
\end{align*}

\end{proof}

\begin{proof}[Proof of Lemma \ref{lemma:variance_bound_for_ridge_regression}]
\label{proof:variance_bound_for_rigde_regeression}
From (B1) in Subsection \ref{subsec:assumptions}, it follows that
\begin{align*}
\norm{f_{\z}^{\lambda}-f_{P}}_{L^2(\X)}^2 \le (c^{*})^2 \norm{f_{\z}^{\lambda}-f_{P}}_{L^2(P)}^2.
\end{align*}
Next, we apply~\cite[Eq.~(12) in Theorem~2]{guo2017learning} with $r=\frac12$ and the case of Tikhonov regularization for $\lambda=n^{-\frac{1}{c_3+1}}$.
In particular, it states that, under assumptions (A2) and (A3),
\begin{align*}
         \int_{(\X\times\Y)^{n}} \norm{f_{\z}^{\lambda}-f_{P}}_{L^2(P)}^2 \diff P^n(\z) \leq c_P n^{-\frac{1}{c_3+1} }, 
\end{align*}
where $c_P$ is some constant that may depend on $P$.
It remains to ensure that $c_P$ does not depend on $P$.
The explicit value of the constant (for a possibly more general setting), can be found in the last line of the proof of Theorem 2 in \citet{guo2017learning}, however, we only need the case $r=\frac12$ and Tikhonov regularization (which leads to $b=\nu_g=\gamma_{\nu}=1$ in their paper).
Combining the notations of \citet{guo2017learning} with our assumptions, the constant reads as follows:
\begin{align*}
c_P=(6 \Gamma (9)+\log (6^8))&\left(\frac{8 \sup_{y\in \Y} |y|}{\kappa}+6 \norm{g_P}_{L^2(P)}\right)^2\cdot\\
&\cdot\left[4\left(\kappa^2+\kappa \sqrt{c_2}\right)^2+1\right]^2\left(2 \kappa^2+2 \kappa \sqrt{c_2}+1\right)^4,
\end{align*}
where $\Gamma(.)$ denotes the Gamma function.
Using assumption (A2), we can uniformly bound $\norm{g_P}_{L^2(P)}$ and get
\begin{align*}
    c_P \le (6 \Gamma (9)+\log (6^8))&\left(\frac{8 \sup_{y\in \Y} |y|}{\kappa}+6 c_1 \right)^2\cdot\\
    &\cdot\left[4\left(\kappa^2+\kappa \sqrt{c_2}\right)^2+1\right]^2\left(2 \kappa^2+2 \kappa \sqrt{c_2}+1\right)^4=:c_8
\end{align*}
which is independent of $P$.

\end{proof}

\begin{proof}[Proof of Lemma \ref{lemma:variance_mapped_mean_embedding}]
\label{proof:variance_mapped_mean_embedding}
It holds that
    \begin{align*}
        &\int_{\X^n}
        \norm{G\cdot(m_{\x}-m_{P_\X})}_{L^2(\X)}^2
        \diff P_\X^n(\x)\\
        &=\int_{\X^n}
        \norm{\int_\X (m_{\x}-m_{P_\X})(x) \beta(\cdot,x)\diff x}_{L^2(\X)}^2
        \diff P_\X^n(\x)\\
        &\le \int_{\X^n}
        \norm{\norm{m_{\x}-m_{P_\X}}_{L^2(\X)} \norm{\beta(y,x)}_{L_x^2(\X)}}_{L_y^2(\X)}^2
        \diff P_\X^n(\x)\\
        &\le \int_{\X^n}
        \norm{m_{\x}-m_{P_\X}}_{L^2(\X)}^2 \int_\X\norm{\beta(y,x)}_{L_x^2(\X)}^2 \diff y
        \diff P_\X^n(\x)\\
        &\leq \int_{\X^n}
        \norm{m_{\x}-m_{P_\X}}_{L^2(\X)}^2 \int_\X\kappa^2 \norm{\beta(y,\cdot)}_{\Hs_k}^2 \diff y
        \diff P_\X^n(\x)\\
        &\leq c_4 \kappa^2 \int_{\M(\X\times\Y)}
        \int_{\X^n}
        \norm{m_{\x}-m_{P_\X}}_{L^2(\X)}^2  \diff P_\X^n(\x)\\
        &\leq c_4\kappa^2 \int_{\M(\X\times\Y)}
        \int_{\X^n}
        \kappa^2 \norm{m_{\x}-m_{P_\X}}_{\Hs_{k}}^2  \diff P_\X^n(\x)\\
        &\leq c_4 \frac{\kappa^6 }{n},
    \end{align*}
    where the first inequality is Cauchy-Schwartz and the last inequality follows from Lemma \ref{lemma:estimation_bound_mean_embeddings}.
\end{proof}
\noindent
In order to prove Lemma~\ref{lemma:operator_norm}, we will discuss the following Corollary first, which applies the main arguments from~\cite{tong2022non} to our setting.
For readers convenience, we provide an overview here:
\begin{corollary}
\label{cor:func_reg}
Consider the algorithm introduced in Subsection \ref{subsec:algorithm}. Under the assumptions stated in Subsection \ref{subsec:assumptions}, if we set $\lambda=N^{-\frac{1}{1+c_6}}$ and $\lambda_i=n^{-\frac{1}{1+c_{3}}}$, we have that for any $0<\delta<1$ with probability $1-\delta$: 
\begin{align*}
\int_{\M(\X)}  \int_{\X^n} \norm{(G_{\z^{(1)},\ldots,\z^{(N)}}^{\lambda_1,\ldots,\lambda_N,\lambda}-G) m_{\x}}_{L^2(\X)}^2 \diff P_\X^n(\x) \diff E(P) \le C(\Bar{\sigma}^2)\frac{\log \frac{4}{\delta}}{\delta^2} N^{-\frac{1}{1+c_6}}.
\end{align*}
\end{corollary}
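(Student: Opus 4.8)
The plan is to recognize the left-hand side as the excess risk of the second-stage functional regression in Eq.~\eqref{eq:functional_regression_of_slope}, measured in the natural $L^2$-norm weighted by the (random) empirical mean embeddings $m_{\x}$, and then to invoke the non-asymptotic analysis of~\cite{tong2022non} essentially verbatim, once we have matched our notation to theirs and supplied the required variance bounds. Concretely, one first observes that
\begin{align*}
\int_{\M(\X)}\int_{\X^n}\norm{(G_{\z^{(1)},\ldots,\z^{(N)}}^{\lambda_1,\ldots,\lambda_N,\lambda}-G)m_{\x}}_{L^2(\X)}^2\diff P_\X^n(\x)\diff E(P)
\end{align*}
is exactly the prediction error associated with estimating the slope $\beta$ (equivalently the operator $G$, since $a_0\equiv 0$ by (A6)) from the $N$ functional input-output pairs $(m_{\x^{(i)}},f_{\z^{(i)}}^{\lambda_i})_{i=1}^N$, where the covariance structure of the inputs is precisely the kernel $C$ from Subsection~\ref{subsec:notation} and the relevant population operator is $T_k=G_k^{1/2}G_CG_k^{1/2}$. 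Under (A4) the slope lives in $\Hs_k$ with $\int_\X\norm{\beta(x,\cdot)}_{\Hs_k}^2\diff x\le c_4$, and $f_0$ with $\beta(t,\cdot)=G_k^{1/2}f_0(t,\cdot)$ plays the role of the source condition element; under (A5) the effective dimension of $T_k$ decays as $\gamma_{T_k}(\lambda)\le c_5\lambda^{-c_6}$. These are exactly the regularity/effective-dimension hypotheses under which~\cite{tong2022non} give a high-probability rate $N^{-1/(1+c_6)}$ for Tikhonov-regularized functional regression with $\lambda=N^{-1/(1+c_6)}$.

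The second ingredient is the effective noise level. In the idealized model Eq.~\eqref{eq:linear_operator_ansatz} the output noise would be just $\sigma^2$, but here the algorithm does not observe $f_{P^{(i)}}$; it observes the plug-in estimate $f_{\z^{(i)}}^{\lambda_i}$ and the empirical embedding $m_{\x^{(i)}}$ rather than $m_{P_\X^{(i)}}$. Writing
\begin{align*}
f_{\z^{(i)}}^{\lambda_i}-G m_{\x^{(i)}}=\big(f_{\z^{(i)}}^{\lambda_i}-f_{P^{(i)}}\big)+\varepsilon^{(i)}-G\big(m_{\x^{(i)}}-m_{P_\X^{(i)}}\big),
\end{align*}
the three terms are, by (B3), mutually independent of $m_{\x^{(i)}}$ and, by (B4), centered; their second moments are controlled by Lemma~\ref{lemma:variance_bound_for_ridge_regression}, the finite-variance assumption on $\varepsilon$, and Lemma~\ref{lemma:variance_mapped_mean_embedding} respectively, which is precisely why $\bar\sigma^2=(c^{*})^2c_8n^{-1/(c_3+1)}+c_4\kappa^6n^{-1}+\sigma^2$ in Eq.~\eqref{eq:bound_variance_total} is the right surrogate for the variance proxy in~\cite{tong2022non}. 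Feeding $\bar\sigma^2$, the bound $c_4$ on $\int_\X\norm{\beta(x,\cdot)}_{\Hs_k}^2\diff x$ (hence on $\norm{f_0}_{L^2(\X^2)}$), the constants $\kappa,c_5$, and the choice $\lambda=N^{-1/(1+c_6)}$ into their Theorem yields exactly the stated bound with the constant $C(\bar\sigma^2)$ displayed in Lemma~\ref{lemma:operator_norm}; the $\tfrac{\log(4/\delta)}{\delta^2}$ factor is the Chebyshev-type confidence boosting used there.

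The main obstacle is purely a matter of careful bookkeeping rather than a new idea: one must verify that every quantity entering the bound of~\cite{tong2022non} has a counterpart here that is \emph{uniform over} $P\in\mathrm{Supp}(E)$, since in our setting each functional sample comes from a different distribution drawn from the meta-measure $E$. This is what assumptions (A1)--(A5) and (B1) are designed to guarantee — the uniform kernel bound $\kappa$, the uniform source-condition constant $c_1$ (hence uniform $c_8$ via Lemma~\ref{lemma:variance_bound_for_ridge_regression}), the uniform effective-dimension constants, and the norm equivalence $\tfrac1{c_*}\norm{\cdot}_{L^2(P)}\le\norm{\cdot}_{L^2(\X)}\le c^*\norm{\cdot}_{L^2(P)}$ that lets us pass freely between the $P$-dependent $L^2$ geometry in which the first-stage ridge estimates are controlled and the fixed $L^2(\X)$ geometry in which the second-stage functional regression is posed. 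Once these uniformities are in place, averaging the per-$P$ bounds against $E$ and applying $a_0\equiv 0$ to drop the intercept finishes the argument; the remaining transition from Corollary~\ref{cor:func_reg} to Lemma~\ref{lemma:operator_norm} is then the coercivity assumption (B2), $\norm{g}_{L^2(\X^2)}^2\le c_7\langle G_C g,g\rangle_{L^2(\X^2)}$, which converts the $C$-weighted prediction error on the left-hand side of the Corollary into the operator-norm (Hilbert--Schmidt) bound on $G_{\z^{(1)},\ldots,\z^{(N)}}^{\lambda_1,\ldots,\lambda_N,\lambda}-G$.
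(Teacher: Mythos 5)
Your plan is correct and follows essentially the same route as the paper's proof: the same surrogate-noise decomposition $f_{\z}^{\lambda}-Gm_{\x}=(f_{\z}^{\lambda}-f_{P})+\varepsilon-G(m_{\x}-m_{P_\X})$ handled via (B3), (B4), Lemma~\ref{lemma:variance_bound_for_ridge_regression} and Lemma~\ref{lemma:variance_mapped_mean_embedding} to obtain $\bar\sigma^2$, followed by an application of the analysis of \cite{tong2022non} under (A4), (A5) with $\lambda=N^{-1/(1+c_6)}$; you also correctly place (B2) in the passage to Lemma~\ref{lemma:operator_norm} rather than in the Corollary itself. The only difference is that the paper unpacks the bias--variance decomposition and the concentration steps of \cite{tong2022non} explicitly, whereas you invoke their theorem as a black box after matching notation, which is a presentational rather than a substantive distinction.
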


\begin{proof}[Proof of Corollary~\ref{cor:func_reg}]

\textbf{Step 1:}
Let us start by reformulating our linear operator Ansatz \ref{subsec:linear_ansatz} as follows:
\begin{equation} \label{eq:func_reg_problem}
f_{\z}^{\lambda}=G \cdot m_{\x}+ \underbrace{f_{\z}^{\lambda}-f_{P}+G\cdot m_{P_\X} -G \cdot m_{\x}+\varepsilon}_{:=\Bar{\varepsilon}}.
\end{equation}
We further decompose the noise term as $\Bar{\varepsilon}=\varepsilon_1+\varepsilon_2+\varepsilon$ with
\begin{equation}\label{eq:error_mean_embedding}
    \varepsilon_1=f_{\z}^{\lambda}-f_{P}
\end{equation}
and
\begin{equation}\label{eq:error_kernel_regression}
    \varepsilon_2=G \cdot (m_{P_\X}-m_{\x}).
\end{equation}
Assumption (B3) ensures that $\Bar{\varepsilon}$ is independent from the sampling process of $m_\x$.
(B4) ensures that both $\varepsilon_1$ and $\varepsilon_2$ are unbiased.
Now we take care of the associated variances $\sigma_1^2$ and $\sigma_2^2$ of $\varepsilon_1$ and $\varepsilon_2$, respectively (this is where the data sample size $n$ enters the bound).
In particular, $\varepsilon_1$ is handled by Lemma \ref{lemma:variance_bound_for_ridge_regression} since
\begin{align*}
     \sigma_1^2=\int_{\M(\X\times \Y)}\int_{(\X\times\Y)^{n}} \norm{f_{\z}^{\lambda}-f_{P}}_{L^2(P)}^2 \diff P^n(\z) \diff E(P)
\end{align*}
and $\varepsilon_2$ is handled by Lemma \ref{lemma:variance_mapped_mean_embedding}.
Using both lemmas together, we end up with Eq.~\eqref{eq:bound_variance_total}.

\textbf{Step 2:}
From assumption (A4) we get $G_k (L^2(\X))=\Hs_k$.
That is, there are $f_0, f_{N,\lambda} \in L^2(\X^2)$ such that 
\begin{align} \label{eq:def_f_fN}
\beta_{\z^{(1)},\ldots,\z^{(N)}}^{\lambda_1,\ldots,\lambda_N,\lambda}(t,.)=G_k^{\frac12}f_{N,\lambda}(t,.) \qquad \text{ and } \qquad \beta(t,.)=G_k^{\frac12}f_0(t,.)
\end{align}
for all $t\in \X$. Using the definition of the operators and Fubini's theorem, it follows that 
    \begin{align}
        \int_{\M(\X)}  \int_{\X^n} \norm{(G_{\z^{(1)},\ldots,\z^{(N)}}^{\lambda_1,\ldots,\lambda_N,\lambda}-G) m_{\x}}_{L^2(\X)}^2 \diff P_\X^n(\x) \diff E(P)&=\innerpro{f_0-f_{N,\lambda}, T_k(f_0-f_{N,\lambda})} \nonumber \\
        &=\norm{T_k^{\frac12}(f_0-f_{N,\lambda})}^2_{L^2(\X^2)}, \label{eq:mat_sin_main_1}
    \end{align}
    where for $f \in L^2(\X^2)$, $T_kf(s,t)=T_k(f(s,.))(t)$.

    \textbf{Step 3:} The empirical counterpart $C_N$ of the covariance kernel $C$ is defined by $C_N(s,t):=\frac1N \sum_{i=1}^N m_{\x_i}(s) m_{\x_i}(t)$ and we denote the associated integral operator by $G_{C_N}$. Moreover, $T_{k,N}=T_k^{\frac12} G_{C_N} T_k^{\frac12}$. Next, Eq.~\eqref{eq:functional_regression_of_slope} can be written as
    \begin{align}
         f_N^{\lambda}
        &=\argmin_{f \in L^2(\X^2)}
        \frac{1}{N}\sum_{i=1}^N
        \norm{G \cdot m_{\x_i}+\bar{\varepsilon}_i-\int f(\cdot, x')(G_k^{\frac12}m_{\x^{(i)}})(x')\diff x' }_{L^2(\X)}^2+\lambda \norm{f}_{L^2(\X^2)}^2 \nonumber \\
        &=\argmin_{f \in L^2(\X^2)}
        \frac{1}{N}\sum_{i=1}^N
        \norm{\underbrace{G \cdot m_{\x_i}+\bar{\varepsilon}_i}_{=Y_i}-\int_\X (G_k^{\frac12}f(\cdot, x')) m_{\x^{(i)}}(x')\diff x' }_{L^2(\X)}^2+\lambda \norm{f}_{L^2(\X^2)}^2.   \label{eq:main_obj_reformulation}   
    \end{align}
    It can be shown by similar techniques as in \citet[Theorem 5.1, Page 117]{engl1996regularization} that 
    \begin{align} \label{eq:mat_sin_main_2}
        f_N^{\lambda}=(T_{k,N}+\lambda I)^{-1}(T_{k,N} f_0+g_N),
    \end{align}
    where 
    \begin{align} \label{eq:def_gN}
    g_N(s,t)=\frac1N \sum_{i=1}^N \bar{\varepsilon}_i(s) (G_k^{\frac12} m_{\x^{(i)}})(t).
    \end{align}
    For the reader's convenience, we will also include a short proof of Eq.~\eqref{eq:mat_sin_main_2} in the following, which is mostly considered as standard in the literature on functional regression.
    
    Let us therefore define $\tilde{T}_k^i: L^2(\X^2) \to L^2(\X)$ by $(\tilde{T}_k^i f)(y)= \int_\X (G_k^{\frac12}f(y, x')) m_{\x^{(i)}}(x')\diff x'$.
    It is easy to see that $(\tilde{T}_k^i)^{*}  g(x,y)= (G_k^{\frac12}m_{\x^{(i)}})(x) g(y)$ and $T_{k,N}=\frac1N \sum_{i=1}^N (\tilde{T}_k^{i})^{*} \tilde{T}^i_k$. Eq.~\eqref{eq:main_obj_reformulation} can now be rewritten as $\argmin_{f \in L^2(\X^2)} \frac{1}{N}\sum_{i=1}^N \norm{Y_i-\tilde{T}_k^i f}^2_{L^2(\X)}+\lambda \norm{f}_{L^2(\X^2)}$.
    Next, we take the Frechét derivative of $F(f)=\norm{Y_i-\tilde{T}_k^i f}^2_{L^2(\X)}+\lambda \norm{f}_{L^2(\X^2)}$ in direction $v \in L^2(\X^2)$ and get 
    \begin{align}
    F'(f)(v)&=2\frac{1}{N}\sum_{i=1}^N \left( \innerpro{Y_i-\tilde{T}_k^i f,\tilde{T}^i_k v }_{L^2(\X)}+2\lambda \innerpro{f,v}_{L^2(\X^2)} \right) \nonumber \\
    &=2 \innerpro{\sum_{i=1}^N(\tilde{T}_k^i)^{*} Y_i+(T_{k,N}+\lambda)f,v}_{L^2(\X^2)}. \label{eq:frechet_result}
    \end{align}
Setting Eq.~\eqref{eq:frechet_result} to zero and using $Y_i=\tilde{T}_k^i \beta +\bar{\varepsilon_i}$, we end up with Eq.~\eqref{eq:mat_sin_main_2}, since all functionals are convex.

\textbf{Step 4:} Combining Eq.~\eqref{eq:mat_sin_main_1} and Eq.~\eqref{eq:mat_sin_main_2}, we obtain:
\begin{align}
\norm{T_k^{\frac12}(f_0-f_{N,\lambda})}_{L^2(\X^2)} \le &\underbrace{\norm{T_k^{\frac12}(T_{k,N}+\lambda I)^{-1}g_N}_{L^2(\X^2)}}_{[A]}\nonumber \\
&+\underbrace{\norm{T_k^{\frac12}((T_{k,N}+\lambda I)^{-1}T_{k,N} f_0-f_0)}_{L^2(\X^2)}}_{[B]} \label{eq:norm_bound_upper}
\end{align}

 \textbf{Step 5:} To get bounds on [A] and [B], we rely on Hoeffding-like concentration inequalities in Hilbert spaces.
 \begin{lemma}[\cite{pinelis1994optimum}]\label{lem:concentration}
Let $\mathcal{H}$ be a Hilbert space and $\xi$ be a random variable with values in $\mathcal{H}$. Assume that $\|\xi\|_{\mathcal{H}} \leq M$ almost surely. Let $\left\{\xi_1, \xi_2, \ldots, \xi_N\right\}$ be a sample of $N$ independent observations for $\xi$. Then for any $0<\delta<1$,
$$
\left\|\frac{1}{N} \sum_{i=1}^N\left[\xi_i-\text{E}(\xi)\right]\right\|_{\mathcal{H}} \leq \frac{2 M \log (2 / \delta)}{N}+\sqrt{\frac{2 \text{E}\left(\|\xi\|_{\mathcal{H}}^2\right) \log (2 / \delta)}{N}}
$$
with confidence at least $1-\delta$.
\end{lemma}
 Lemma~\ref{lem:concentration} can now be applied to the random variable $\xi_A=(T_k+\lambda I)^{-\frac12} \innerpro{G_k^{\frac12}m_{\x},.}_{L^2(\X)}G_k^{\frac12}m_{\x}$, which takes values in the space of Hilbert Schmidt operators in $L^2(\X)$.
 We denote the associated norm by $\norm{.}_{\text{HS}(L^2(\X))}$.
 The expectation $\text{E}$ is to be taken with respect to the sampling process of the functional input data, i.e. the empirical mean embeddings in our case.
 Combining Lemma \ref{lemma:uniform bound_mean_embeddings} (which gives a uniform bound on the input functional data) with the spectral decomposition of $T_k$, one obtains $\norm{\xi_A}_{\text{HS}(L^2(\X))}\le \frac{\kappa^{10}}{\sqrt{\lambda}}$ and $\text{E}(\norm{\xi_A}^2_{\text{HS}(L^2(\X))}) \le \kappa^{10} \gamma_{T_k}(\lambda)$. Applying Lemma~\ref{lem:concentration} yields
 \begin{align}
     \norm{(T_k+\lambda I)^{-\frac12}(T_k-T_{k,N})}_{\text{Op}(L^2(\X))} &\le \frac{2 \kappa^5 }{\sqrt{N}} \left(\frac{\kappa^5}{\sqrt{\lambda N} }+\sqrt{\gamma_{T_k}(\lambda)}\right) \log (2 / \delta)\nonumber\\
     &=:C_1(N,\gamma) \log (2 / \delta), \label{eq:norm_diff_t_temp}
 \end{align}
see~\cite[Proposition~3.2]{tong2022non} for details.

\textbf{Step 6:} Using the relation 
\begin{align*}
    BA^{-1}=(B-A)B^{-1}(B-A)A^{-1}+(B-A)B^{-1}+I
\end{align*}
for the product of invertible operators $A$ and $B$ on Banach spaces, together with Eq.~\eqref{eq:norm_diff_t_temp} and the bounds $\norm{(T_{k,N}+\lambda I)^{-\frac12}}_{\text{Op}(L^2(\X))} \le \frac{1}{\lambda}$, $\norm{(T_k+\lambda I)^{-\frac12}}_{\text{Op}(L^2(\X))} \le \frac{1}{\sqrt{\lambda}}$ (which follow from the spectral theorem), we get
\begin{align} \label{eq:mat_sin_main_3}
    &\norm{(T_k+\lambda I)(T_{k,N}+\lambda I)^{-1}}_{\text{Op}(L^2(\X))}  \nonumber \\
    &\le \frac{1}{\lambda}\norm{(T_k+\lambda I)^{-\frac12}(T_k-T_{k,N})}^2_{\text{Op}(L^2(\X))}+\frac{1}{\sqrt{\lambda}}\norm{(T_k+\lambda I)^{-\frac12}(T_k-T_{k,N})}_{\text{Op}(L^2(\X))} +1 \nonumber \\ &\le \left( \frac{C_1(N,\lambda) \log (4 / \delta)}{\sqrt{\lambda}}+1\right)^2
\end{align}
with confidence at least $1-\frac{\delta}{2}$, see~\cite[Proposition 3.3]{tong2022non} for details.

\textbf{Step 7:} The inequality
\begin{align*}
    \norm{A^{\alpha}B^{\alpha}}_{\text{Op}(L^2(\X))} \le \norm{AB}^{\alpha}_{\text{Op}(L^2(\X))},
\end{align*}
valid for positive operators $A$, $B$ on Hilbert spaces and $0 < \alpha <1$, allows us to bound [A] in Eq.~\eqref{eq:mat_sin_main_2} as follows:
\begin{align*}
    \norm{T_k^{\frac12}(T_{k,N}+\lambda I)^{-1}g_N}_{L^2(\X^2)} \le \norm{(T_k+\lambda I)(T_{k,N}+\lambda I)^{-1}}_{\text{Op}(L^2(\X))} \norm{(T_k+\lambda I)^{-\frac12}g_N}_{L^2(\X^2)}.
\end{align*}
To deal with $\norm{(T_k+\lambda I)^{-\frac12}g_N}_{L^2(\X^2)}$, we apply Chebyshev's inequality to the random variable 
\begin{align*}
    \xi_B=(T_k+\lambda I)^{-\frac12} \bar{\varepsilon}(s) (G_k^{\frac12} m_{\x})(t) 
\end{align*}
with values in $L^2(\X^2)$. Using the spectral decomposition of $T_k$, and the assumptions (B3) and (B4) from \ref{subsec:assumptions} on $\bar{\varepsilon}$, we obtain $\text{E}(\xi_B)=0$ and $\text{E}(\norm{\xi_B}_{L^2(\X^2)}^2) =\sigma^2 \gamma_{T_k}(\lambda)$ and thus:
\begin{align*}
    \norm{(T_k+\lambda I)^{-\frac12}g_N}_{L^2(\X^2)} \le \frac{2 \bar{\sigma}}{\delta}\sqrt{\frac{\gamma_{T_k}(\lambda)}{n}} \le \frac{\bar{\sigma} C_1(N,\lambda)}{\kappa^5 \delta},
\end{align*}
with probability at least $1-\frac{\delta}{2}$.
Using Eq.~\eqref{eq:mat_sin_main_3}, we get
\begin{align} \label{eq:mat_sin_main_4}
    [A] \le \frac{\bar{\sigma}(\log (4 / \delta))^2}{\kappa^5 \delta} \left( \frac{C_1(N,\lambda) }{\sqrt{\lambda}}+1\right)^2 C_1(N,\lambda),
\end{align}
with probability $1-\delta$. See \cite[Theorem 3.4]{tong2022non} for details.

\textbf{Step 8:} Using $(T_{k,N}+\lambda I)^{-1}T_{k,N} f_0-f_0=-\lambda(T_{k,N}+\lambda I)^{-1}f_0$ we obtain
\begin{align} \label{eq:mat_sin_main_5}
    [B] \le \sqrt{\lambda} \left( \frac{C_1(N,\lambda) \log (4 / \delta)}{\sqrt{\lambda}}+1\right) \norm{f_0}_{L^2(\X^2)},
\end{align}
with probability $1-\frac{\delta}{2}$ on the same event as Eq.~\eqref{eq:norm_diff_t_temp}. See~\cite[Theorem 3.5]{tong2022non} for details.

\textbf{Step 9:}
To finish the proof of Corollary \ref{cor:func_reg}, we set $\lambda=N^{-\frac{1}{1+c_6}}$ and observe that
\begin{align} \label{eq:C_1_est}
    C_1(N,\lambda) \le 2\kappa^5(\kappa^5+\sqrt{c_5})\sqrt{\lambda}.
\end{align}
Putting everything together, we end up with
\begin{align*}
    \int_{\M(\X)}  &\int_{\X^n} \norm{(G_{\z^{(1)},\ldots,\z^{(N)}}^{\lambda_1,\ldots,\lambda_N,\lambda}-G) m_{\x}}_{L^2(\X)}^2 \diff P_\X^n(\x) \diff E(P) \le 2([A]+[B]) \\
    &\le \frac{\log \frac{4}{\delta}}{\delta^2} N^{-\frac{1}{1+c_6}} 2\left(\frac{\Bar{\sigma}^2\left(2 \kappa^5   \left(\kappa^5  +\sqrt{c_5}\right)+1\right)^6}{\kappa^{10} }+\left(2 \kappa^5  \left(\kappa^5 +\sqrt{c_5}\right)+1\right)^2 \norm{f_0}_{L^2(\X^2)}^2\right),
\end{align*}
where the first inequality follows from Eq.~\eqref{eq:mat_sin_main_1}, Eq.~\eqref{eq:mat_sin_main_2} and Eq.~\eqref{eq:mat_sin_main_3}, and the last one from Eq.~\eqref{eq:mat_sin_main_4}, Eq.~\eqref{eq:mat_sin_main_5} in combination with Eq.~\eqref{eq:C_1_est}, assumption (A5) (to control the effective dimension) and the specific choice of $\lambda$.
\hfill
\end{proof}

\begin{proof}[Proof of Lemma \ref{lemma:operator_norm}]
\label{proof:operator_norm}
To upper bound $\norm{G_{\z^{(1)},\ldots,\z^{(N)}}^{\lambda_1,\ldots,\lambda_N,\lambda}-G}^2_{\text{Op}(L^2(\X))}$, we first observe that it is enough to upper bound $\norm{\beta_{\z^{(1)},\ldots,\z^{(N)}}^{\lambda_1,\ldots,\lambda_N,\lambda}-\beta}^2_{L^2(\X^2)}$ (i.e. the Hilbert Schmidt norm of the associated integral operators). Thus, recalling \eqref{eq:def_f_fN}, we equivalently aim to control
$\norm{G_k^{\frac12}(f_0-f_{N,\lambda})}_{L^2(\X^2)}$. Using the coercivity assumption (B2), we deduce that
\begin{align*}
    \norm{G_k^{\frac12} f}^2_{L^2(\X^2)}&=\innerpro{G_k^{\frac12} f,G_k^{\frac12} f}_{L^2(\X^2)}\le c_7 \innerpro{G_C G_k^{\frac12} f,G_k^{\frac12} f}_{L^2(\X^2)}\\
    &=c_7 \innerpro{G_k^{\frac12} G_C G_k^{\frac12} f, f}_{L^2(\X^2)}=c_7\innerpro{T_k f, f}_{L^2(\X^2)}=c_7\norm{T_k^{\frac12}f}^2_{L^2(\X^2)}
\end{align*}
for all $f \in L^2(\X^2)$. Setting $f=f_0-f_{N,\lambda}$, we obtain:
\begin{align*}
    \norm{G_k^{\frac12}(f_0-f_{N,\lambda})}^2_{L^2(\X^2)} \le c_7 \norm{T_k^{\frac12}(f_0-f_{N,\lambda})}^2_{L^2(\X^2)}, 
\end{align*}
so that the lemma is proven by Eq.~\eqref{eq:mat_sin_main_1} applying the result from Corollary \ref{cor:func_reg}.
\end{proof}

\section*{Acknowledgements}

The research reported in this paper has been partly funded by the Federal Ministry for Climate Action, Environment, Energy, Mobility, Innovation and Technology (BMK), the Federal Ministry for Digital and Economic Affairs (BMDW), and the Province of Upper Austria in the frame of the COMET–Competence Centers for Excellent Technologies Programme and the COMET Module S3AI managed by the Austrian Research Promotion Agency FFG.

\bibliography{domgen.bib}

\end{document}